\documentclass{article}

\PassOptionsToPackage{numbers,compress}{natbib}

 \usepackage[preprint]{neurips_2026}

\usepackage[utf8]{inputenc} 
\usepackage[T1]{fontenc}    
\usepackage{hyperref}       
\usepackage{url}            
\usepackage{booktabs}       
\usepackage{multirow}       
\usepackage{amsmath}        
\usepackage{amsthm}         
\usepackage{amssymb}        
\usepackage{amsfonts}       
\usepackage{nicefrac}       
\usepackage{microtype}      
\usepackage{xcolor}         
\usepackage{graphicx}      
\usepackage{caption}       

\newtheorem{proposition}{Proposition}

\newtheorem{hypothesis}{Hypothesis}
\title{Unlocking Complex Visual Generation via Closed-Loop Verified Reasoning}

\author{%
  Hanbo Cheng$^{1}$ \And
  Limin Lin$^{2}$ \And
  Ruo Zhang$^{2}$ \And
  Yicheng Pan$^{1}$ \And
  Jun Du$^{1}$ \AND
  \normalfont
  $^{1}$University of Science and Technology of China (USTC) \\
  $^{2}$Independent Researcher \\
  \small project page: \url{https://hanbo-cheng.github.io/CLVR_Proj/}
}

\begin{document}

\maketitle

\vspace{-0.25em}
\begin{center}
  \noindent
  \begin{minipage}[c]{0.4995\linewidth}
    \includegraphics[width=\linewidth]{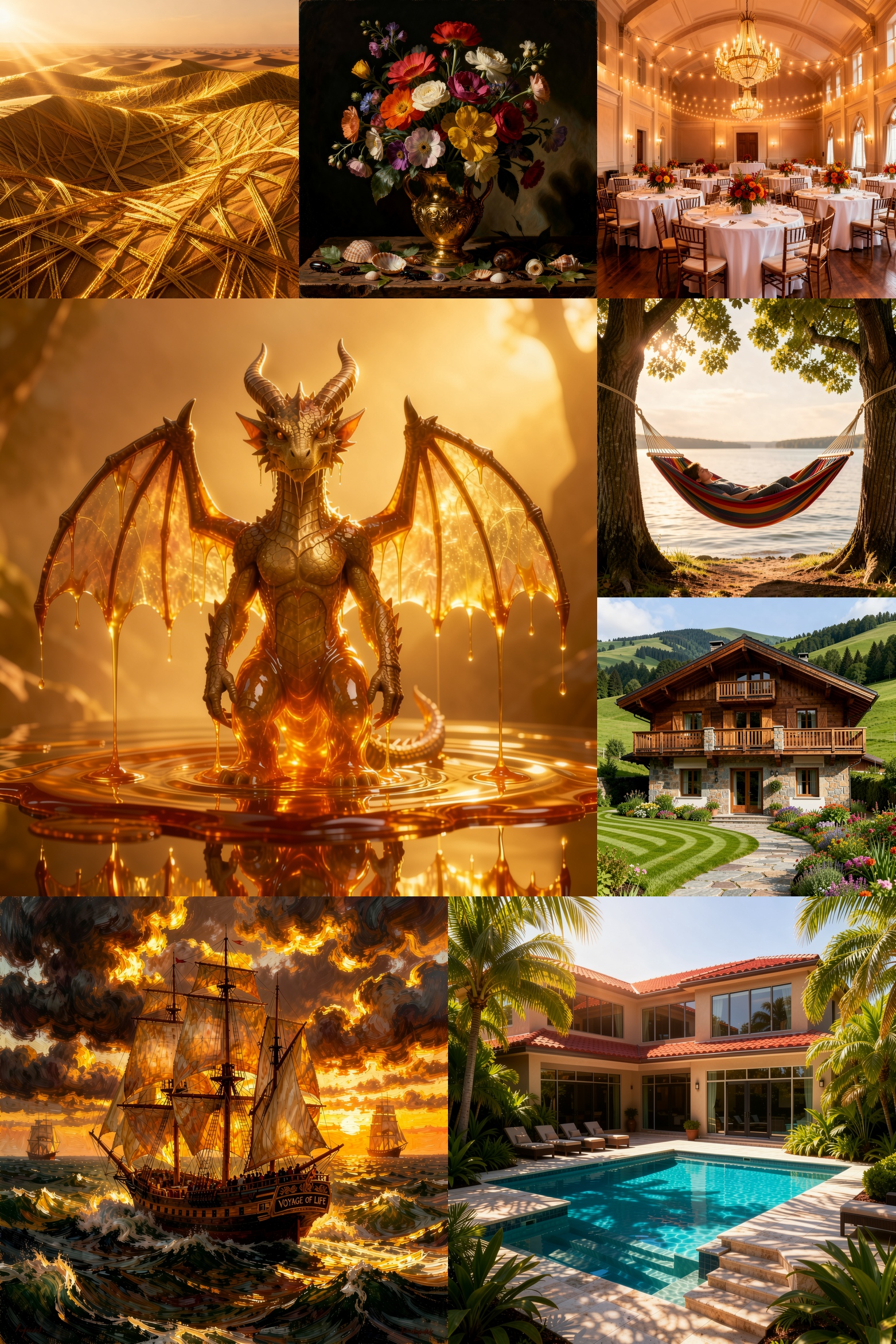}%
  \end{minipage}%
  \begin{minipage}[c]{0.4995\linewidth}
    \includegraphics[width=\linewidth]{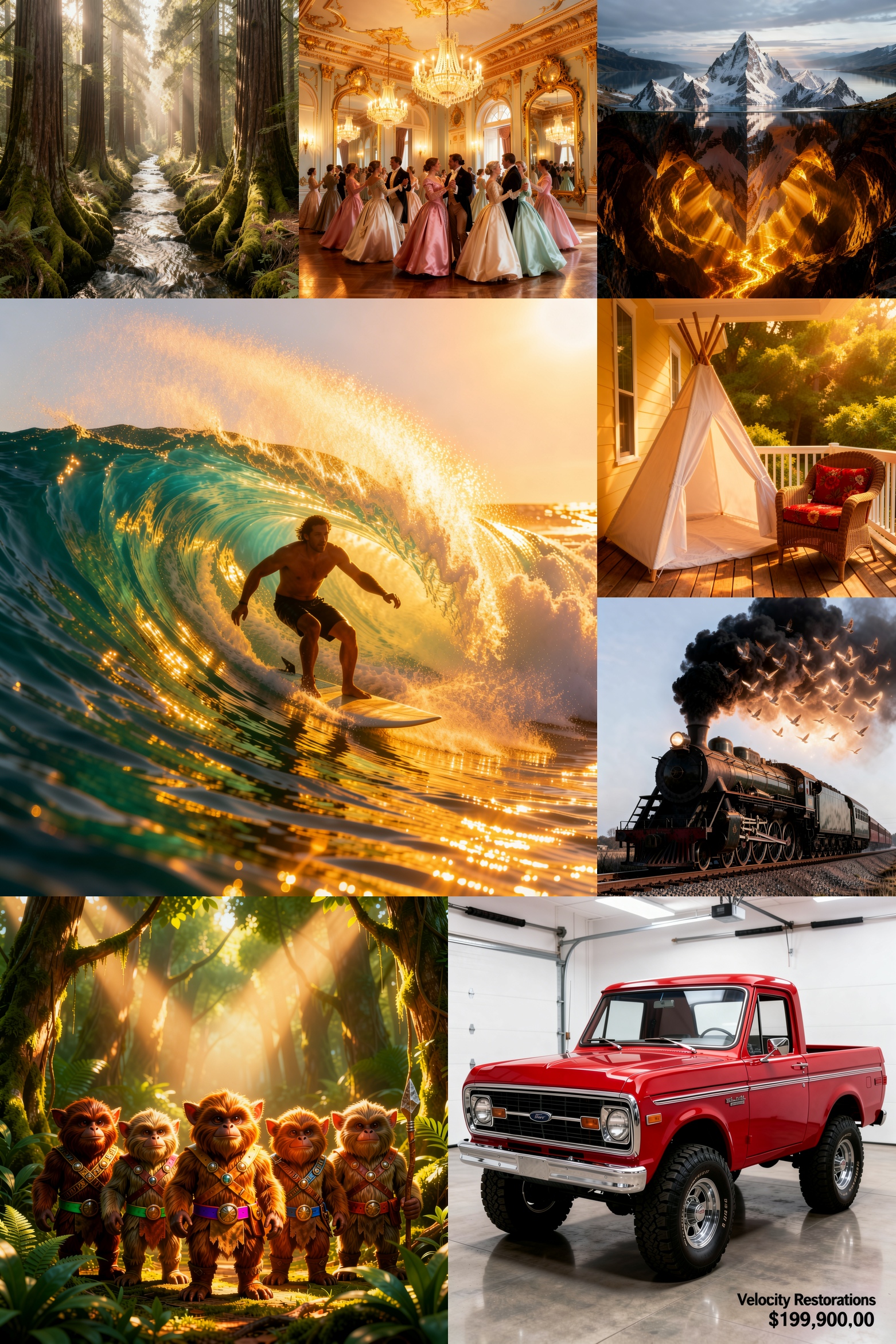}%
  \end{minipage}\\[0.45em]
  \begingroup
  \captionsetup{hypcap=false}%
  \captionof{figure}{Qualitative results of CLVR. The prompts are from the PRISM benchmark \cite{PRISM-Bench}.}%
  \label{fig:gallery_combined}%
  \endgroup
\end{center}
\vspace{0.35em}

\begin{abstract}
  Despite rapid advancements, current text-to-image (T2I) models predominantly rely on a single-step generation paradigm, which struggles with complex semantics and faces diminishing returns from parameter scaling. 
  While recent multi-step reasoning approaches show promise, they are hindered by ungrounded planning hallucinations lacking verification, monolithic post-hoc reflection, long-context optimization instabilities, and prohibitive inference latency. To overcome these bottlenecks, we propose the Closed-Loop Visual Reasoning (CLVR) framework, a comprehensive system that deeply couples visual-language logical planning with pixel-level diffusion generation. CLVR introduces an automated data engine with step-level visual verification to synthesize reliable reasoning trajectories, and proposes Proxy Prompt Reinforcement Learning (PPRL) to resolve long-context optimization instabilities by distilling interleaved multimodal histories into explicit reward signals for accurate causal attribution. Furthermore, to mitigate the severe latency bottleneck caused by iterative denoising, we propose $\Delta$-Space Weight Merge (DSWM), a theoretically grounded method that fuses alignment weights with off-the-shelf distillation priors, reducing the per-step inference cost to just 4 NFEs without requiring expensive re-distillation. Extensive experiments demonstrate that CLVR outperforms existing open-source baselines across multiple benchmarks and approaches the performance of proprietary commercial models, unlocking general test-time scaling capabilities for complex visual generation.
\end{abstract}



\section{Introduction}

In recent years, text-to-image (T2I) generation models have made remarkable progress in visual quality and realism \cite{QwenImage, Seedream4, Z-Image, longcat}. However, current T2I systems predominantly follow a "single-step generation" paradigm, attempting to map all textual instructions to pixels in a single forward pass. While effective for simple prompts, this approach often struggles with complex inputs---leading to attribute confusion, missing entities, or misaligned spatial relations \cite{geneval,T2I-CompBench, Echo-4o, niu2025wise}. This indicates that the single-step generation paradigm faces an empirical capacity ceiling when handling complex semantics.

Through a controlled complexity-stratified probing study, we observed that as semantic complexity increases, advanced single-step models inevitably suffer from structural degradation (see Section \ref{sec:probe}). While increasing model capacity offers some relief, it yields diminishing marginal returns: achieving linear capability gains typically demands exponential increases in parameters and compute \cite{Scaling_law}. Such disproportionate costs imply that scaling alone may not be the most efficient or sustainable route to achieving precise semantic alignment.
Recently, the integration of Chain-of-Thought (CoT) reasoning has led to substantial improvements in the performance of Large Language Models and Vision-Language Models (LLM/VLM) on complex logic and planning tasks \cite{o1, DeepSeek-R1}. Inspired by this paradigm shift, a natural question arises: can a similar CoT approach be extended to image generation? This has motivated the transition from traditional one-step generation toward a reasoning-based generation paradigm, where complex visual objectives are achieved through a sequential, CoT-style generative process.

However, transitioning such closed-loop visual reasoning from a conceptual framework to practical systems still faces four major technical challenges. First, \textbf{a lack of high-quality verified data}: existing synthesis methods for visual Chain-of-Thought (CoT) trajectories often lack rigorous verification. Consequently, while introducing a thinking process improves the final output, the intermediate reasoning steps are typically ungrounded and error-prone, which severely limits the overall effectiveness of CoT \cite{UniCoT,IRG}. Second, \textbf{inadequate task decomposition}: current text-to-image CoT paradigms predominantly rely on post-hoc reflection rather than breaking down complex prompts into simpler, manageable sub-tasks. As a result, the final generation quality remains largely predetermined by the initial generation step \cite{UniCoT,Process_Driven}. Third, \textbf{multimodal long-context optimization}: visual CoT inherently introduces long, interleaved image-text contexts. Models easily become confused by such extended inputs, fundamentally reflecting a lack of multimodal understanding capability under existing training paradigms. Finally, \textbf{architectural coupling and inefficiency}: many approaches \cite{TWiG, Vinci, Loom} rely on Unified Multimodal Models (UMMs) \cite{Janus-Pro,Bagel} to process multimodal outputs simultaneously, leading to slow inference speeds. Furthermore, this reliance on UMMs prevents these methods from seamlessly leveraging the rapid, independent advancements of standalone Vision-Language Models (VLMs) and Diffusion base models \cite{DraCo, MURE}.

To address these challenges, we propose the Closed-Loop Visual Reasoning (CLVR) framework that fully connects data synthesis, model alignment, inference mechanisms, and deployment acceleration. The main contributions of this paper are as follows:

\begin{enumerate}
  \item \textbf{CLVR Paradigm for General Test-Time Scaling}: To tackle the inadequate task decomposition and multimodal long-context optimization instabilities, we propose the Closed-Loop Visual Reasoning (CLVR) framework for text-to-image generation. Specifically, by introducing Proxy Prompt Reinforcement Learning (PPRL) to achieve stable optimization over extended multimodal contexts, our method successfully unlocks more general test-time scaling capabilities in visual generation tasks.

  \item \textbf{Automated Data Engine for Verified Trajectories}: To address the lack of high-quality verified data for visual CoT, we propose a fully automated data production framework capable of generating verified, high-quality CLVR trajectories. This establishes a solid data foundation for test-time scaling in visual generation.

  \item \textbf{$\Delta$-Space Weight Merge (DSWM) for Fast Inference}: To overcome the architectural inefficiency and severe latency bottlenecks of iterative reasoning, we introduce DSWM, a method that leverages distillation priors to accelerate CLVR inference. Supported by theoretical analysis and ablation results, DSWM achieves promising speedups, transforming multi-step visual reasoning from a theoretical framework into a practically deployable solution.

  \item \textbf{System-Level Cross-Benchmark Improvements}: Across multiple evaluated benchmarks, CLVR outperforms most open-source baselines included in our comparison and narrows the gap to proprietary models.
\end{enumerate}

\section{Related work}
\paragraph{Reasoning-enhanced Text-to-Image Generation}
Existing approaches attempt to improve complex semantic alignment through pre-planning \cite{ImageGen_CoT, Coco} or interleaved reasoning and reflection \cite{IRG, ReflectionFlow}. However, these methods suffer from two primary technical limitations. 
First, verification in existing trajectory-construction pipelines is often insufficient: many training examples still contain diffusion-side execution failures, so supervision implicitly mixes reliable steps with erroneous rollouts and biases learning toward post-hoc error correction rather than planning within verifiably executable bounds \cite{UniCoT}. 
Second, information decay in extended histories causes the model to lose track of global constraints, leading to inconsistent outputs over multiple iterations \cite{Loom}.

\paragraph{Unified Multimodal Generation Models}
Unified Multimodal Models (UMMs) integrate understanding and generation within a single architecture \cite{Show-o, Janus-Pro, Bagel}. While UMMs offer native multimodal processing for CoT reasoning \cite{TWiG, DraCo, IRG}, their tightly coupled parameters result in substantial joint training costs. More importantly, this monolithic design prevents the system from leveraging the rapid iterative advancements of independent VLM and diffusion base models, causing the overall capability growth to lag behind specialized state-of-the-art foundations.

\paragraph{Diffusion Alignment and Distillation}
Current preference alignment \cite{DiffusionNFT, Flow-GRPO} and distillation techniques \cite{CM, LCM, DMD} are primarily optimized for direct text-to-image generation. In multi-step reasoning contexts, existing RL-based alignment struggles because traditional reward models lack the capacity to interpret and evaluate the interleaved logic within complex multimodal histories, leading to reward collapse. Furthermore, the scarcity of specialized trajectory data makes re-distilling these closed-loop systems impractical \cite{PCM}.

\section{Method}
\label{sec:method}

In this section, we present the Closed-Loop Visual Reasoning (CLVR) framework (Figure~\ref{fig:clvr_overview}). Our framework comprises three core components: (1) Trajectory Synthesis: We employ a state-constrained controller with step-level validation to generate reliable, interleaved CoT trajectories. (2) Diffusion Alignment: We introduce Proxy Prompt Reinforcement Learning (PPRL) to achieve stable optimization over extended multimodal contexts. (3) Efficient Deployment: During inference, we utilize trajectory-accumulative conditioning for historical consistency and propose $\Delta$-Space Weight Merge (DSWM), a theoretically grounded method that fuses alignment weights with distillation priors to achieve substantial acceleration without re-distillation.

\begin{figure}[t]
  \centering
  \includegraphics[width=\linewidth]{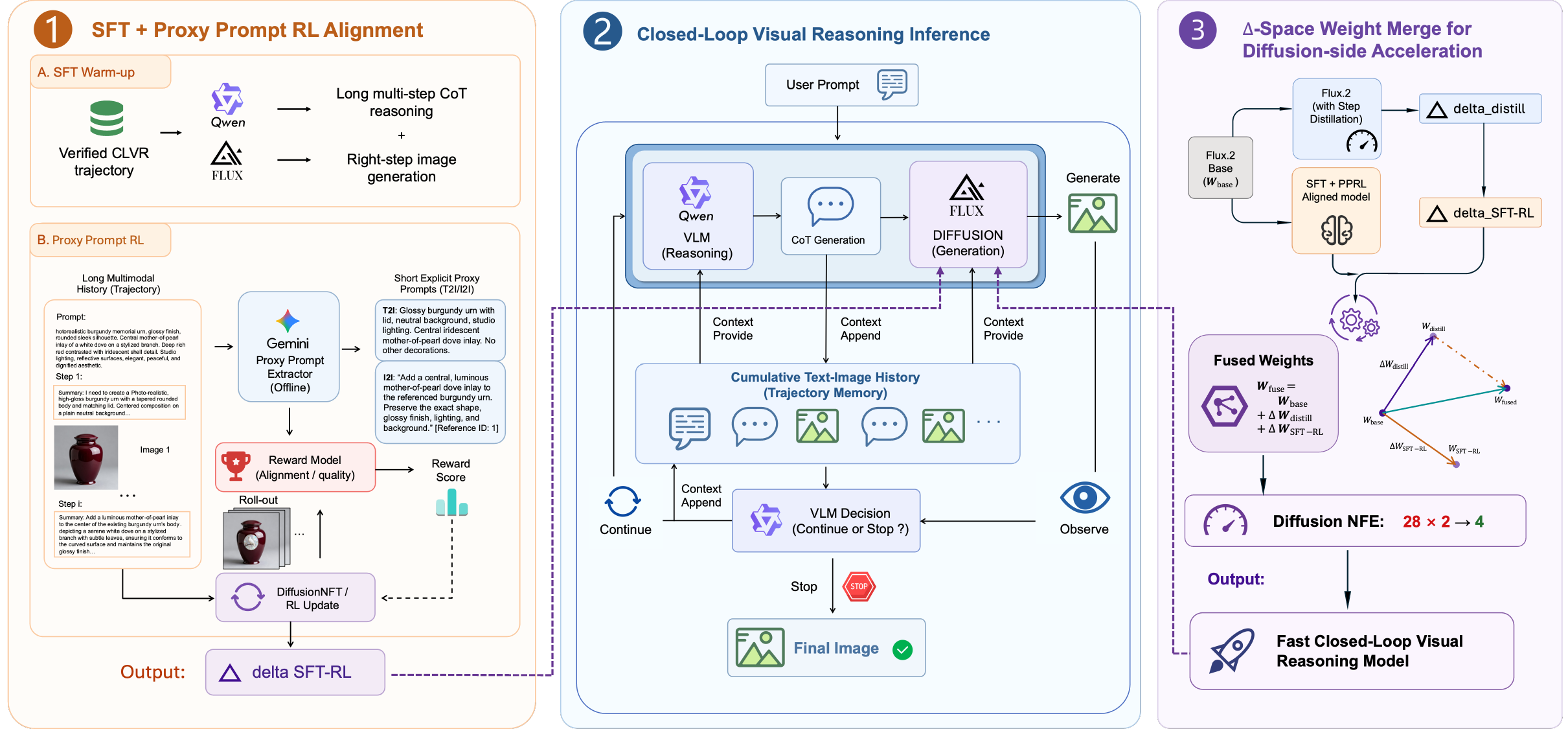}
  \caption{Overview of the CLVR framework. The pipeline consists of three main components: (1) Flowchart of SFT and Proxy Prompt Reinforcement Learning; (2) Pipeline of the CLVR inference framework; (3) Schematic of the $\Delta$-Space weight merge algorithm. Finally, the merged model from (3) is integrated into the inference framework to achieve efficient CLVR inference.}
  \label{fig:clvr_overview}
\end{figure}

\subsection{Closed-Loop Visual Reasoning Data Synthesis}
\label{sec:datapipeline}

To prevent the cascading failures commonly observed in multi-step generative processes, we design a verification-centric data engine for CLVR. As illustrated in Figure \ref{fig:datapipeline}, this pipeline transforms the framework into a practical, scalable system by providing high-fidelity, verified reasoning trajectories. We conceptualize the VLM as a closed-loop controller following a Reason-to-Act paradigm \cite{ReAct}. At each step, it assesses the canvas, reasons about semantic gaps, and enacts decisions by invoking discrete tools (e.g., Initial Generation, Image Editing, Result Validation, or Trajectory Termination).

Crucially, to ensure robustness without compromising model capacity, our data engine features a dual-track verification mechanism:
\begin{itemize}
    \item \textit{Passive verification} acts as a step-level gatekeeper. After every generative tool call, a sub-agent confirms whether the diffusion model successfully executed the given instruction via a dynamically generated checklist. If a step fails, we interpret it as exceeding the diffusion model's inherent capacity and immediately discard the entire trajectory context to restart from scratch. This strict filtering ensures that no generative errors contaminate the final dataset.
    \item \textit{Active verification} serves as the global error-correction hub. It is explicitly invoked by the controller to validate whether the current canvas aligns with the user prompt. If semantic gaps are detected, it provides actionable feedback, allowing the controller to dynamically adjust its plan and re-execute prior steps, thereby closing the reasoning loop.
\end{itemize}

Beyond step-level and interactive validation, candidate trajectories undergo consensus-based global filtering. We generate a single-step baseline and conduct a blind A/B comparison evaluated by two independent judge VLMs (Gemini 2.5 Pro \cite{Gemini25}, Seed 1.8 \cite{seed2026seed18modelcardgeneralized}). A trajectory is retained only if both judges agree that the multi-step CoT result achieves superior instruction following and visual quality.

Finally, during the execution-to-reasoning translation phase, we convert the discrete execution logs into coherent natural language CoT narratives. This preserves temporal consistency, critical observations, and feedback-driven corrections, making the raw tool sequences directly suitable for model alignment (shown in Appendix, Figure \ref{fig:cot_case}). See Appendix \ref{sec:appendix_data_pipeline} for a detailed description of the CLVR data pipeline.

\begin{figure}[t]
  \centering
  \includegraphics[width=\linewidth]{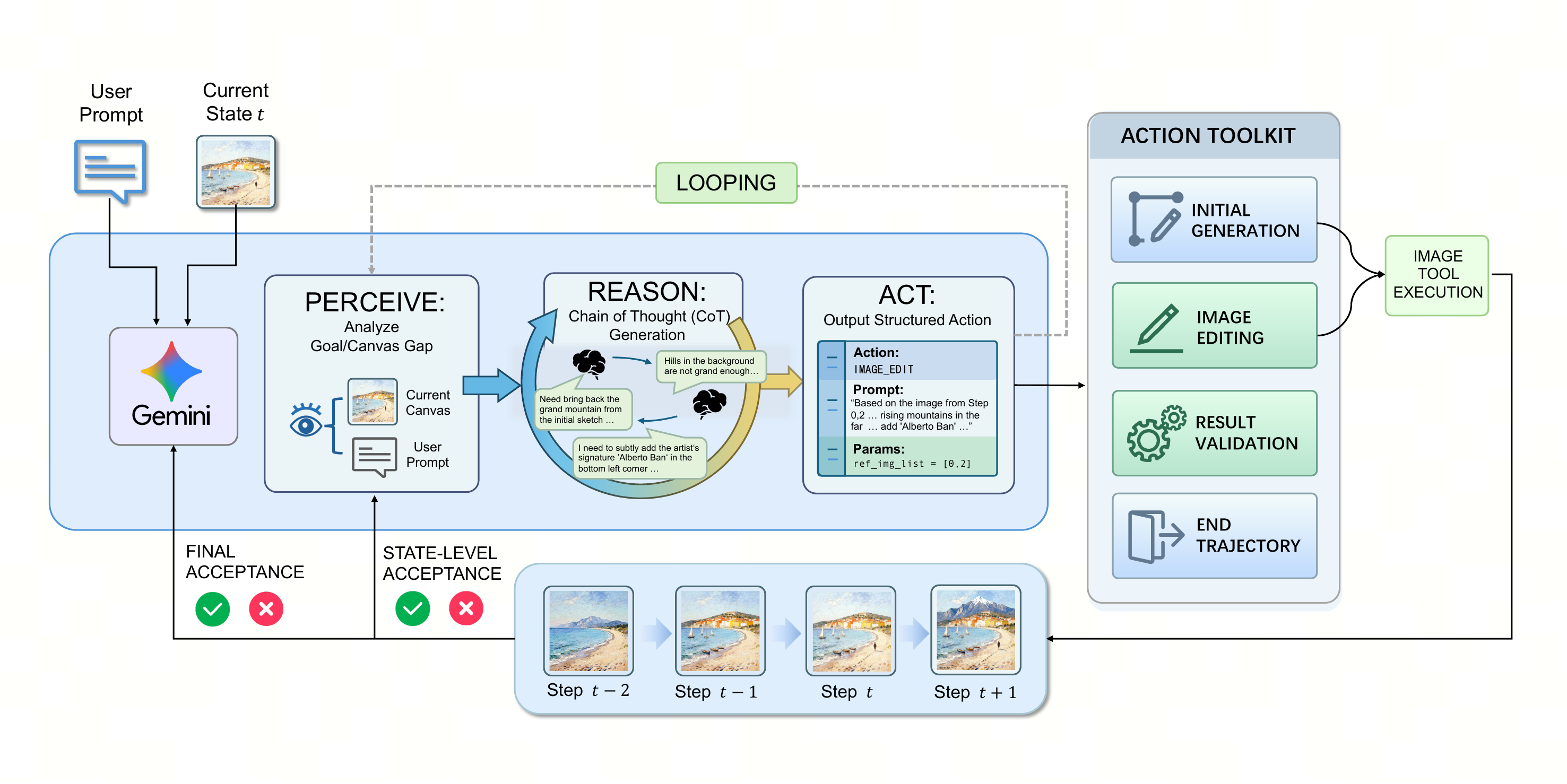}
  \caption{Architecture of the CLVR data synthesis pipeline, featuring a Perceive-Reason-Act workflow. The system is controlled by Gemini 2.5 Pro \cite{Gemini25}.}
  \label{fig:datapipeline}
\end{figure}

\subsection{Proxy Prompt Reinforcement Learning}

Building on the verified trajectories from Section~\ref{sec:datapipeline}, we propose a two-stage alignment pipeline: Supervised Fine-Tuning (SFT) followed by Proxy Prompt Reinforcement Learning (PPRL). We first employ standard SFT as a warm-up training to adapt both the VLM and diffusion model to multi-step planning. This transitions the models from short-prompt priors to interleaved reasoning trajectories, establishing a robust policy initialization for subsequent RL.

\paragraph{Training Data Truncation}

To construct the training objective for closed-loop visual reasoning, we utilize offline ground-truth reasoning trajectories. Given a complete trajectory $\mathcal{T} = \{(r_0, \mathbf{x}_0), \dots, (r_T, \mathbf{x}_T)\}$, we truncate it at an arbitrary step $t \in [0, T]$. This truncation yields the local multimodal context $\mathbf{c}_t$, which represents the history prior to the current generation:
\begin{equation}
    \mathbf{c}_t = \{x_{\text{prompt}}, (r_0, \mathbf{x}_0), \dots, (r_{t-1}, \mathbf{x}_{t-1}), r_t\},
\end{equation}
where $x_{\text{prompt}}$ is the initial user goal, $r_i$ denotes the textual reasoning, and $\mathbf{x}_i$ is the generated image at step $i$. By treating $\mathbf{c}_t$ as the conditional input and $\mathbf{x}_t$ as the optimization target, we can explicitly train the diffusion policy to generate accurate images conditioned on lengthy, incremental visual states.

\paragraph{Proxy Prompt Mechanism}

To stabilize alignment over extended multimodal contexts, we introduce the \textbf{Proxy Prompt} mechanism, as shown in Figure \ref{fig:clvr_overview} (1). In multi-step visual reasoning, directly evaluating generated images against long-range interleaved histories often introduces significant reward noise, as standard reward models are typically optimized for short, explicit instructions rather than verbose Chain-of-Thought trajectories.

To bridge this gap, we employ a powerful foundation VLM (denoted as $f_{\text{VLM}}$) as an offline teacher to distill the complex history $\mathbf{c}_t$ into explicit, evaluable instructions, the proxy prompts. For both initial generation ($t=0$) and subsequent image editing ($t>0$), the extraction process is formalized as:
\begin{equation}
\begin{cases}
p_{\text{T2I}} = f_{\text{VLM}}(\mathbf{c}_t), & \text{if } t = 0 \text{ (Initial Generation)} \\
(p_{\text{T2I}}, p_{\text{I2I}}, \mathbf{I}_{\text{ref}}) = f_{\text{VLM}}(\mathbf{c}_t), & \text{if } t > 0 \text{ (Image Editing)}
\end{cases}
\end{equation}
where $p_{\text{T2I}}$ denotes the comprehensive scene description, $p_{\text{I2I}}$ represents the specific editing instruction, and $\mathbf{I}_{\text{ref}}$ is a list of indices for reference images selected by the VLM from the historical image set $\mathbf{C}_{\text{img}} \in \mathbf{c}_t$.

The final proxy reward $R_{\text{proxy}}$ combines a global quality reward model ($R_{\text{T2I}}$) and an editing reward model ($R_{\text{I2I}}$), calculated as follows:
\begin{equation}
R_{\text{proxy}}(\mathbf{c}_t, \mathbf{a}) =
\begin{cases} 
R_{\text{T2I}}(\mathbf{a}, p_{\text{T2I}}), & \text{if } t = 0 \\
0.5 \cdot R_{\text{T2I}}(\mathbf{a}, p_{\text{T2I}}) + 0.5 \cdot R_{\text{I2I}}(\mathbf{a}, \mathbf{C}_{\text{img}}[\mathbf{I}_{\text{ref}}], p_{\text{I2I}}), & \text{if } t > 0
\end{cases}
\end{equation}

By utilizing proxy prompts, we essentially distill the long-context understanding capabilities of the foundation VLM into the RL reward signal via natural language and reference image indices. Upon obtaining $R_{\text{proxy}}$, we employ the DiffusionNFT algorithm \cite{DiffusionNFT} for step-wise policy optimization. Specifically, we use $R_{\text{proxy}}$ as the reward feedback to guide the diffusion model toward the high-quality generation distribution defined by the proxy prompts, while maintaining the SFT prior knowledge through KL constraints.

\subsection{Closed-Loop Visual Reasoning Inference}

To maintain global consistency and retain critical constraints across multiple reasoning steps, we formulate the inference pipeline as an interactive agentic workflow. 
This framework deploys the Visual Language Model (VLM) as an autonomous router policy $\pi_{\text{VLM}}$ and the diffusion model as a context-aware generator $\mathcal{D}_{\text{gen}}$, establishing a multi-turn, self-feedback execution loop. The core mechanism involves trajectory-accumulative conditioning, where the context fed to the diffusion model dynamically maintains the full reasoning trace rather than just the initial prompt. This empowers the diffusion model to deeply comprehend long-horizon dependencies and complex instructions, a capability explicitly enhanced through our PPRL optimization.

The CLVR workflow is depicted in Figure~\ref{fig:clvr_overview} (2). Specifically, at each iteration $t$, the VLM evaluates the current canvas state $\mathbf{x}_{t-1}$ alongside the accumulated multimodal history $\mathbf{c}_{t-1} = \{x_{\text{prompt}}, (r_0, \mathbf{x}_0), \dots, (r_{t-1}, \mathbf{x}_{t-1})\}$. It then formulates an action plan by sampling a reasoning narrative $r_t$ and a discrete action signal $a_t$ according to $(r_t, a_t) \sim \pi_{\text{VLM}}(\cdot | \mathbf{c}_{t-1})$. If the VLM determines that the canvas requires further modification (i.e., $a_t = \texttt{<|image\_gen|>}$), it dispatches the generation task to the diffusion model. The condition state is updated to $\mathbf{c}_t = \mathbf{c}_{t-1} \cup \{r_t\}$, and the diffusion model, leveraging its enhanced long-context understanding, synthesizes a new refined image $\mathbf{x}_t = \mathcal{D}_{\text{gen}}(\mathbf{c}_t, \mathbf{x}_{t-1})$. This new image is then appended to the history, and the loop advances to the next round of inspection. Conversely, if the VLM judges that the current image sufficiently fulfills the user goal (i.e., $a_t = \texttt{<|terminate|>}$), it triggers a termination signal and outputs the current canvas $\mathbf{x}_{t-1}$ as the final result. 

\subsection{\texorpdfstring{$\Delta$-Space}{Delta-Space} Weight Merge for Deployable Reasoning}
\label{sec:DSWM}
To achieve deployable inference speeds, diffusion models typically rely on step distillation. However, applying standard re-distillation to reasoning-specialized models is impractical due to the prohibitive cost of constructing large-scale, high-quality Chain-of-Thought (CoT) trajectory data. To bypass this data bottleneck, we propose directly reusing off-the-shelf T2I/I2I distillation priors via parameter merging, based on a geometric decoupling analysis.

\paragraph{Theoretical Analysis: Normal-Tangent Approximate Decoupling}
We explore the mathematical feasibility of linearly fusing existing distilled weights ($\Delta \mathbf{W}_{\text{distill}}$) with newly learned closed-loop alignment weights ($\Delta \mathbf{W}_{\text{Align}} = \Delta \mathbf{W}_{\text{SFT}} + \Delta \mathbf{W}_{\text{RL}}$). Let the base diffusion model be $f(\mathbf{W}_{\text{base}})$. 

\begin{proposition}[Linear Superposition of First-Order Perturbations] \label{prop:linear_superposition}
Assuming the parameter variations introduced by fine-tuning reside within the local linear perturbation region, the output increment of the fused model can be approximately decomposed as the linear superposition of independent task increments:
\begin{equation}
f(\mathbf{W}_{\text{base}} + \Delta \mathbf{W}_{\text{distill}} + \Delta \mathbf{W}_{\text{Align}}) \approx f(\mathbf{W}_{\text{base}}) + \Delta \mathbf{f}_{\text{distill}} + \Delta \mathbf{f}_{\text{Align}}
\end{equation}
\end{proposition}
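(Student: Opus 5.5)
The plan is a first-order Taylor expansion of $f$ around $\mathbf{W}_{\text{base}}$. Throughout, $f$ is viewed as a map from parameters to outputs for a fixed input (noisy latent, timestep, and condition $\mathbf{c}_t$), and it is assumed twice continuously differentiable in a neighbourhood of $\mathbf{W}_{\text{base}}$. The individual task increments are defined as
\[
\Delta \mathbf{f}_{\text{distill}} := f(\mathbf{W}_{\text{base}} + \Delta \mathbf{W}_{\text{distill}}) - f(\mathbf{W}_{\text{base}}), \qquad \Delta \mathbf{f}_{\text{Align}} := f(\mathbf{W}_{\text{base}} + \Delta \mathbf{W}_{\text{Align}}) - f(\mathbf{W}_{\text{base}}).
\]
This definition makes the statement a claim about the real fine-tuned models rather than about their linearizations. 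The hypothesis ``local linear perturbation region'' is made precise as $\|\Delta \mathbf{W}_{\text{distill}}\|, \|\Delta \mathbf{W}_{\text{Align}}\| \le \epsilon$, where $\epsilon$ is small relative to the inverse curvature scale $1/M$, with $M := \sup \|\nabla^2 f\|$ over the ball of radius $2\epsilon$.

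The argument then proceeds in three steps. First, expand each single-task model with Taylor's theorem in integral (Lagrange) form. This gives $\Delta \mathbf{f}_{\text{distill}} = J\,\Delta \mathbf{W}_{\text{distill}} + R_1$ and $\Delta \mathbf{f}_{\text{Align}} = J\,\Delta \mathbf{W}_{\text{Align}} + R_2$, where $J = \nabla_{\mathbf{W}} f(\mathbf{W}_{\text{base}})$ and $\|R_i\| \le \tfrac{M}{2}\epsilon^2$. Second, expand the merged model the same way. Its combined perturbation has norm at most $2\epsilon$, so it stays in the same ball, and
\[
f(\mathbf{W}_{\text{base}} + \Delta \mathbf{W}_{\text{distill}} + \Delta \mathbf{W}_{\text{Align}}) = f(\mathbf{W}_{\text{base}}) + J(\Delta \mathbf{W}_{\text{distill}} + \Delta \mathbf{W}_{\text{Align}}) + R_3, \qquad \|R_3\| \le 2M\epsilon^2 .
\]
Third, use linearity of $J$ to split $J(\Delta \mathbf{W}_{\text{distill}} + \Delta \mathbf{W}_{\text{Align}})$ into $J\Delta \mathbf{W}_{\text{distill}} + J\Delta \mathbf{W}_{\text{Align}}$, then substitute the first-step identities. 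The result is the stated decomposition up to the error $R_3 - R_1 - R_2 = O(M\epsilon^2)$, which is second order while the main increments are first order.

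I would also isolate the leading term of this error, $\Delta \mathbf{W}_{\text{distill}}^{\top}\,\nabla^2 f\,\Delta \mathbf{W}_{\text{Align}}$: the diagonal second-order terms cancel against $R_1, R_2$ to leading order, leaving only this cross term. This points to the normal--tangent decoupling the paper develops next. If the two updates act on nearly orthogonal directions of the Hessian (one roughly ``normal'' and one ``tangent''), the cross term is smaller than the crude $O(\epsilon^2)$ bound.

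The main obstacle is justifying the smallness hypothesis itself. For a deep diffusion transformer, neither $M$ nor the update norms are controlled a priori, and in $\Delta \mathbf{W}_{\text{Align}} = \Delta \mathbf{W}_{\text{SFT}} + \Delta \mathbf{W}_{\text{RL}}$ the two fine-tuning updates may not be small at all. I would state the result honestly as a conditional approximation with explicit remainder $O(M\epsilon^2)$, rather than prove the assumption. Supporting evidence would be appeals to the low-rank, small-norm structure of LoRA-style updates and the linear-mode-connectivity and task-arithmetic literature, with the empirical ablations as a check.
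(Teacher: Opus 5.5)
Your proof is correct and uses the same machinery as the paper: a first-order Taylor expansion at $\mathbf{W}_{\text{base}}$, linearity of the Jacobian, and dropping the quadratic remainder. You prove a stronger statement, however, because you define the task increments differently.

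The paper sets $\Delta \mathbf{f}_{\text{distill}} \triangleq \mathbf{J}_{\mathbf{W}}\Delta \mathbf{W}_{\text{distill}}$ and $\Delta \mathbf{f}_{\text{Align}} \triangleq \mathbf{J}_{\mathbf{W}}\Delta \mathbf{W}_{\text{Align}}$. A single expansion of the merged model plus linearity of $\mathbf{J}_{\mathbf{W}}$ then finishes the proof, and the error is just that one remainder. This is short but nearly definitional, and it says nothing directly about the actual distilled and aligned checkpoints.

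You instead take the increments to be the true finite differences $f(\mathbf{W}_{\text{base}}+\Delta\mathbf{W})-f(\mathbf{W}_{\text{base}})$ of the two fine-tuned models, which is the quantity DSWM actually relies on. This costs two extra expansions, but it buys three things:
\begin{itemize}
\item an explicit bound $\|R_3-R_1-R_2\|\le 3M\epsilon^2$;
\item the observation that the diagonal curvature terms cancel to leading order, leaving the cross term $\Delta \mathbf{W}_{\text{distill}}^{\top}\nabla^2 f\,\Delta \mathbf{W}_{\text{Align}}$ as the leading error;
\item exact agreement whenever either update is zero, which the paper's linearized definition does not give (its error is $R_1$ even when $\Delta \mathbf{W}_{\text{Align}}=0$).
\end{itemize}
Together these are sharper than the paper's generic $\mathcal{O}(\|\Delta \mathbf{W}\|^2)$.

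Two cautions on your side remarks. First, your link from the cross term to the paper's normal--tangent decoupling is only heuristic. The paper's orthogonality is between the first-order output increments, $\langle \mathbf{J}_{\mathbf{W}}\Delta \mathbf{W}_{\text{distill}}, \mathbf{J}_{\mathbf{W}}\Delta \mathbf{W}_{\text{Align}}\rangle \approx 0$, and this places no constraint on the Hessian bilinear form $\Delta \mathbf{W}_{\text{distill}}^{\top}\nabla^2 f\,\Delta \mathbf{W}_{\text{Align}}$. Keep that remark explicitly conditional on a separate assumption.

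Second, ``$\epsilon$ small relative to $1/M$'' is not a scale-invariant smallness condition. What makes the remainder negligible is $M\epsilon^2 \ll \|\mathbf{J}_{\mathbf{W}}\Delta \mathbf{W}\|$ for each update, which also requires the updates to stay away from $\ker \mathbf{J}_{\mathbf{W}}$. The paper's proof shares this imprecision.
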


We provide a local geometric interpretation to explain why these two updates can be empirically compatible in our setting:

\begin{proposition}[Normal-Tangent Approximate Decoupling] \label{prop:decoupling}
Under the assumptions of infinitesimal perturbations and the absence of reward hacking, the dominant component of the distillation output increment ($\Delta \mathbf{f}_{\text{distill}}$) is approximately orthogonal to the true data manifold $\mathcal{M}$. Conversely, the alignment increment ($\Delta \mathbf{f}_{\text{Align}}$) remains approximately tangent to the manifold:
\begin{equation}
\langle \Delta \mathbf{f}_{\text{distill}}, \Delta \mathbf{f}_{\text{Align}} \rangle \approx 0
\end{equation}
\end{proposition}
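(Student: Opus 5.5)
The plan is to make the statement precise as a local geometric claim about velocity fields near the data manifold. We then argue that the two increments live in complementary subspaces: the normal bundle $N_{\mathbf{x}}\mathcal{M}$ and the tangent bundle $T_{\mathbf{x}}\mathcal{M}$.

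We work at a point $\mathbf{x}$ on (or in a tubular neighborhood of) $\mathcal{M}$. By Proposition~\ref{prop:linear_superposition} we may write
\[
\Delta \mathbf{f}_{\text{distill}} \approx J\,\Delta \mathbf{W}_{\text{distill}}, \qquad \Delta \mathbf{f}_{\text{Align}} \approx J\,\Delta \mathbf{W}_{\text{Align}},
\]
with $J=\partial f/\partial \mathbf{W}$ evaluated at $\mathbf{W}_{\text{base}}$. Using this linearization, the inner product becomes a bilinear quantity in the two weight increments. It therefore suffices to show that each image lies, up to higher-order terms, in $N_{\mathbf{x}}\mathcal{M}$ and $T_{\mathbf{x}}\mathcal{M}$ respectively. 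Since $T_{\mathbf{x}}\mathcal{M}\perp N_{\mathbf{x}}\mathcal{M}$, the decomposition $\Delta\mathbf{f}=P_T\Delta\mathbf{f}+P_N\Delta\mathbf{f}$ then gives $\langle \Delta \mathbf{f}_{\text{distill}}, \Delta \mathbf{f}_{\text{Align}} \rangle = \langle P_N\Delta \mathbf{f}_{\text{distill}}, P_T\Delta \mathbf{f}_{\text{Align}}\rangle + \text{(cross residuals)} \approx 0$.

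\textbf{Alignment increment is tangent.} SFT and RL (DiffusionNFT with KL regularization) reweight the target distribution. The new target $p'\propto p\,e^{R/\beta}$ has the same support $\mathcal{M}$ as $p$; here we use the no-reward-hacking assumption, which says the reward does not push mass off the manifold. The optimal denoiser near $\mathcal{M}$ decomposes into a normal component determined by the distance to $\mathcal{M}$, which is identical for any distribution supported on $\mathcal{M}$. A tangential component depends on the density along $\mathcal{M}$. Hence the change in the score/velocity under reweighting is $\nabla_{\mathcal{M}}(R/\beta)$ plus tangential terms, and this lies in $T_{\mathbf{x}}\mathcal{M}$ to leading order. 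The infinitesimal-perturbation assumption lets us treat this change as a first-order variation.

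\textbf{Distillation increment is normal.} Step distillation (consistency/DMD-style) preserves the marginal data distribution, so its target is the same $p$ on $\mathcal{M}$. What it changes is the large-step transport: it replaces the instantaneous velocity by a trajectory-averaged one that projects noisy samples onto $\mathcal{M}$ in few steps. Taking the difference between the few-step map and the base velocity, the component along $\mathcal{M}$ cancels because the endpoint distribution on $\mathcal{M}$ is unchanged. The residual is the correction of the off-manifold, curvature-induced drift, i.e.\ a vector in $N_{\mathbf{x}}\mathcal{M}$. Only its dominant part is claimed to be normal, which matches the statement.

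\textbf{Main obstacle.} The hardest step is the last one. Rigorously showing that the distillation increment has negligible tangential component requires controlling the tangential error of the distilled sampler, which is exactly the distribution-matching error of distillation. I would first try to bound $\|P_T\Delta\mathbf{f}_{\text{distill}}\|$ by the distillation loss (e.g.\ a KL or Wasserstein gap between the pushforwards) via a Lipschitz argument. I would treat this as an explicit assumption if the bound cannot be closed. The cross residual terms are then second order in the perturbation sizes and are absorbed into the $\approx$.
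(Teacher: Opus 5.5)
Your tangent half matches the paper's argument in a sharper form. The paper likewise argues that SFT and KL-regularized RL keep the support on $\mathcal{M}$ and only redistribute density along it, with curvature effects higher order. Your tilted target $p'\propto p\,e^{R/\beta}$, with leading-order score change $\nabla_{\mathcal{M}}(R/\beta)$, makes that precise.

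The gap is in the normal half. The inference ``the component along $\mathcal{M}$ cancels because the endpoint distribution on $\mathcal{M}$ is unchanged'' does not follow. Equality of pushforward distributions says nothing about the pointwise increment $\Delta\mathbf{f}_{\text{distill}}$ at a fixed $(\mathbf{x}_t,t)$. A distribution-matching objective such as DMD is invariant under any measure-preserving rearrangement of samples along $\mathcal{M}$. So a distilled generator can reproduce $p_{\text{data}}$ exactly while its output at a given noisy state is displaced tangentially by $O(1)$. The same example defeats your fallback: the KL or Wasserstein gap is zero for such a rearrangement. Hence no Lipschitz argument can bound $\|P_T\Delta\mathbf{f}_{\text{distill}}\|$ by the distillation loss, and treating it as an assumption means assuming half of the proposition. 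Separately, ``few-step map minus base velocity'' compares an endpoint with an instantaneous field. $\Delta\mathbf{f}_{\text{distill}}$ is a difference of two network outputs in the same parametrization.

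The missing idea is to get normality from the \emph{direction} of the distillation update rather than from what distillation preserves. The paper proves this part only for distribution-matching distillation, in three steps:
\begin{enumerate}
\item With stop-gradient on the real-score term, the DMD update at an off-manifold large-step prediction $\hat{\mathbf{x}}_{\text{fast}}$ is driven by $\nabla_{\mathbf{x}}\log p_{\text{data}}^{\sigma}(\hat{\mathbf{x}}_{\text{fast}})$.
\item By Tweedie's formula and the small-noise asymptotics of denoising autoencoders (Alain--Bengio), this score is dominated in a tubular neighborhood by $\sigma^{-2}\bigl(\pi_{\mathcal{M}}(\hat{\mathbf{x}}_{\text{fast}})-\hat{\mathbf{x}}_{\text{fast}}\bigr)$.
\item This nearest-point residual lies in $N_{\pi_{\mathcal{M}}(\hat{\mathbf{x}}_{\text{fast}})}\mathcal{M}$ by the first-order optimality condition of the projection.
\end{enumerate}
You already use this fact in your alignment paragraph, where the normal part of the denoiser depends only on the distance to $\mathcal{M}$. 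Applied to the distillation signal, it gives the normal-residency claim directly.

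You also group consistency/trajectory distillation with DMD. The paper does not handle that case with a normal--tangent argument at all; it uses the separate time-locality bound of Proposition~\ref{prop:truncation}.
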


\textit{Physical Intuition:} The distillation operator acts as a shortest-path projection, pulling off-manifold states back onto $\mathcal{M}$, thus its effect is dominated by a \textbf{normal space} ($N\mathcal{M}$). In contrast, the alignment process (SFT and RL) redistributes probability density along the manifold surface to satisfy instructions and maximize rewards, primarily operating within the \textbf{tangent space} ($T_{\mathbf{x}}\mathcal{M}$). This normal-tangent intuition motivates the approximate decoupling described in Proposition \ref{prop:decoupling}. (See Appendix \ref{appendix:proof_prop1} and \ref{appendix:proof_prop2} for the corresponding local analysis).

\paragraph{Method Implementation}
Guided by this theoretical decoupling, we introduce \textbf{$\Delta$-Space Weight Merge (DSWM)}. Taking the base model as an anchor, we directly sum the distilled checkpoint increments and our alignment increments:
\begin{equation}
\mathbf{W}_{\text{fused}} = \mathbf{W}_{\text{base}} + \Delta \mathbf{W}_{\text{distill}} + \Delta \mathbf{W}_{\text{Align}}
\end{equation}
By deploying this single $\mathbf{W}_{\text{fused}}$ checkpoint, the framework integrates the truncation-error reduction of step distillation (via the normal pull) with the complex reasoning capabilities of closed-loop alignment (via tangent exploration). This offline mechanism circumvents the CoT data reconstruction bottleneck, enabling high-quality, low-latency reasoning inference.

\begin{figure}[t]
  \centering
  \includegraphics[width=0.85\linewidth]{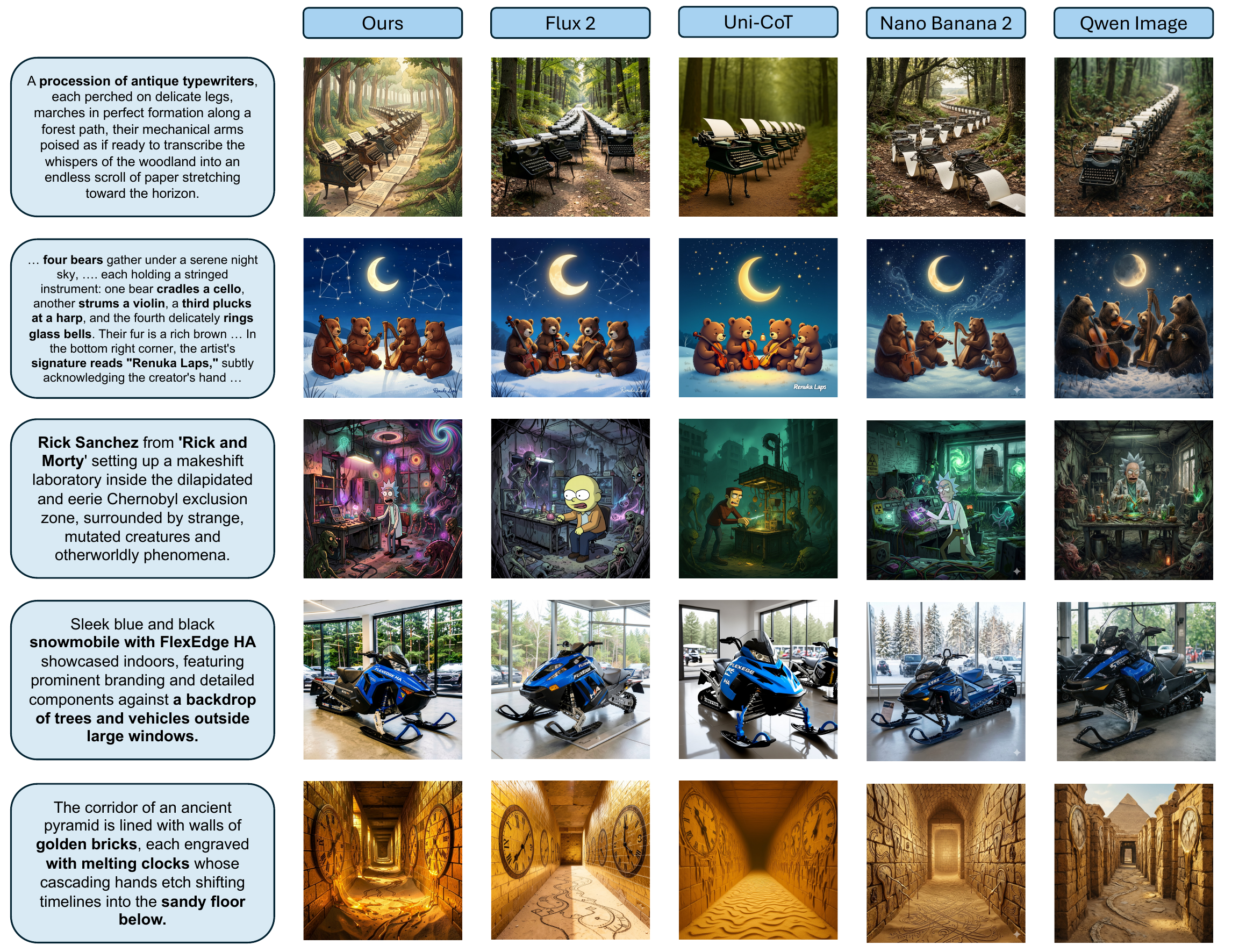}
  \caption{Visual comparison of generation results (CLVR) with other methods. Key control signals in the prompts are highlighted in bold.}
  \label{fig:compare_w_baseline}
\end{figure}

\begin{table}[t]
  \centering
  \small
  \begin{minipage}[t]{0.5\linewidth}
    \vspace{0pt}
    \centering
    \captionof{table}{Quantitative comparisons on GenEval.}
    \label{tab:exp-geneval-detailed}
    \resizebox{0.87\linewidth}{!}{
    \begin{tabular}{lccccccc}
      \toprule
      \begin{tabular}[c]{@{}c@{}}Model\end{tabular} &
      \begin{tabular}[c]{@{}c@{}}Single\\Object\end{tabular} &
      \begin{tabular}[c]{@{}c@{}}Two\\Objects\end{tabular} &
      \begin{tabular}[c]{@{}c@{}}Counting\end{tabular} &
      \begin{tabular}[c]{@{}c@{}}Colors\end{tabular} &
      \begin{tabular}[c]{@{}c@{}}Position\end{tabular} &
      \begin{tabular}[c]{@{}c@{}}Color\\Attributes\end{tabular} &
      \begin{tabular}[c]{@{}c@{}}Overall\end{tabular} \\
    \midrule
    SD3.5-Large \cite{SD3.5} & 0.98 & 0.89 & 0.73 & 0.83 & 0.34 & 0.47 & 0.71 \\
    FLUX.1-dev \cite{flux2024} & 0.98 & 0.93 & 0.75 & 0.93 & 0.68 & 0.65 & 0.82 \\
    Emu3-Gen \cite{Emu3} & 0.99 & 0.81 & 0.42 & 0.80 & 0.49 & 0.45 & 0.66 \\
    Janus-Pro-7B \cite{Janus-Pro} & 0.99 & 0.89 & 0.59 & 0.90 & 0.79 & 0.66 & 0.80 \\
    Show-o2 \cite{Show-o2} & 1.00 & 0.87 & 0.58 & 0.92 & 0.52 & 0.62 & 0.76 \\
    BAGEL-7B \cite{Bagel} & 0.99 & 0.95 & 0.76 & 0.87 & 0.51 & 0.56 & 0.77 \\
    Z-Image \cite{Z-Image}& 1.00 & 0.94 & 0.78 & 0.93 & 0.62 & 0.77 & 0.84 \\
    Qwen-Image \cite{QwenImage}& 0.99 & 0.92 & 0.89 & 0.88 & 0.76 & 0.77 & 0.87 \\
    Seedream 3.0 \cite{Seedream3} & 0.99 & 0.96 & 0.91 & 0.93 & 0.47 & 0.80 & 0.84 \\
    GPT-4o \cite{GPT-4o}& 0.99 & 0.92 & 0.85 & 0.92 & 0.75 & 0.61 & 0.84 \\
    Uni-CoT \cite{UniCoT}& 0.99 & 0.96 & 0.84 & 0.92 & 0.57 & 0.71 & 0.83 \\
    Process-Driven \cite{Process_Driven}  & 0.99 & 0.95 & 0.75 & 0.87 & 0.72 & 0.69 & 0.83 \\
    \midrule
    FLUX.2 4B [Klein] base \cite{flux-2} & 0.99 & 0.87 & 0.62 & 0.85 & 0.52 & 0.59 & 0.74 \\
    CLVR (4B) & 0.99 & 0.92 & 0.85 & 0.89 & 0.85 & 0.71 & 0.87 \\
    \midrule
    FLUX.2 9B [Klein] base \cite{flux-2} & 1.00 & 0.85 & 0.80 & 0.89 & 0.59 & 0.69 & 0.80 \\
    CLVR (9B) & 1.00 & 0.96 & 0.89 & 0.91 & 0.80 & 0.70 & \textbf{0.88} \\
    \bottomrule
    \end{tabular}
    }
  \end{minipage}\hfill
  \begin{minipage}[t]{0.5\linewidth}
    \vspace{0pt}
    \centering
    \captionof{table}{Quantitative comparisons on WiseBench.}
    \label{tab:exp-wisebench-detailed}
    \resizebox{1.06\linewidth}{!}{
    \begin{tabular}{lccccccc}
    \toprule
    Model & Cultural & Time & Space & Biology & Physics & Chemistry & Overall \\
    \midrule
    FLUX.1-dev \cite{flux2024} & 0.48 & 0.58 & 0.62 & 0.42 & 0.51 & 0.35 & 0.50 \\
    SD3.5-Large \cite{SD3.5} & 0.44 & 0.50 & 0.58 & 0.44 & 0.52 & 0.31 & 0.46 \\
    GPT-4o \cite{GPT-4o} & 0.81 & 0.71 & 0.89 & 0.83 & 0.79 & 0.74 & \textbf{0.80} \\
    LongCat-Image \cite{longcat} & 0.66 & 0.61 & 0.72 & 0.66 & 0.72 & 0.49 & 0.65 \\
    Qwen-Image \cite{QwenImage} & 0.62 & 0.63 & 0.77 & 0.57 & 0.75 & 0.40 & 0.62 \\
    Hunyuan-Image 3.0 \cite{HunyuanImage} & 0.58 & 0.57 & 0.70 & 0.56 & 0.63 & 0.31 & 0.57 \\
    BAGEL \cite{Bagel} & 0.44 & 0.55 & 0.68 & 0.44 & 0.60 & 0.39 & 0.52 \\
    T2I-R1 \cite{T2I-R1} & 0.56 & 0.55 & 0.63 & 0.54 & 0.55 & 0.30 & 0.54 \\
    Uni-CoT \cite{UniCoT} & 0.66 & 0.60 & 0.79 & 0.64 & 0.67 & 0.60 & 0.65 \\
    \midrule
    FLUX.2 4B [Klein] base \cite{flux-2} & 0.37 & 0.46 & 0.60 & 0.41 & 0.55 & 0.36 & 0.44 \\
    CLVR (4B) & 0.73 & 0.72 & 0.83 & 0.73 & 0.80 & 0.64 & 0.74 \\
    \midrule
    FLUX.2 9B [Klein] base \cite{flux-2} & 0.46 & 0.56 & 0.66 & 0.50 & 0.62 & 0.42 & 0.52 \\
    CLVR (9B) & 0.76 & 0.73 & 0.83 & 0.76 & 0.79 & 0.67 & 0.76 \\
    \bottomrule
    \end{tabular}
    }
  \end{minipage}
\end{table}

  \begin{table}[t]
    \centering
    \small
    \caption{Quantitative comparisons on PRISM \cite{PRISM-Bench}.}
    \label{tab:exp-prism-detailed}
    \resizebox{\linewidth}{!}{
    \begin{tabular}{lcccccccc}
    \toprule
    Model & Imagination & Entity & Text Rendering & Style & Affection & Composition & Long Text & Overall \\
    \midrule
    Seedream 3.0 \cite{Seedream3} & 76.9 & 77.0 & 63.2 & 85.7 & 89.8 & 89.8 & \underline{75.0} & 79.6 \\
    Qwen-Image \cite{QwenImage} & 79.6 & 76.3 & 61.6 & 86.6 & \underline{90.4} & 90.3 & 74.5 & 79.9 \\
    SD3.5-Large \cite{SD3.5} & 72.3 & 74.3 & 58.9 & 80.7 & 86.2 & 85.9 & 58.0 & 73.9 \\
    FLUX.1-dev \cite{flux2024} & 71.1 & 71.0 & 56.3 & 76.4 & 89.7 & 86.8 & 64.6 & 73.9 \\
    SD3.5-Medium \cite{SD3.5} & 71.3 & 68.3 & 41.7 & 78.9 & 85.2 & 84.3 & 56.9 & 69.5\\
    Uni-CoT \cite{UniCoT} & 74.4 & 55.5 & 37.6 & 73.7 & 80.9 & 80.1 & 60.0 & 66.1 \\
    Janus-Pro-7B \cite{Janus-Pro} & 68.1 & 59.5 & 26.1 & 72.6 & 75.4 & 72.4 & 51.1 & 60.7 \\
    BAGEL \cite{Bagel} & 68.7 & 54.6 & 37.4 & 69.6 & 81.6 & 81.8 & 61.7 & 65.1 \\
    GPT-4o \cite{GPT-4o} & 86.4 & \textbf{88.2} & \underline{69.7}	 & \textbf{90.7} & \textbf{92.1} & \underline{92.8} & 78.3 & \textbf{86.3} \\
    Gemini 2.5-Flash-Image \cite{Gemini25} & \underline{88.6} & \underline{84.2} & \textbf{78.5} & \underline{88.6} & 84.5 & 90.5 & \textbf{81.1} & \underline{85.3} \\
    
    \midrule
    FLUX.2 4B [Klein] base \cite{flux-2} & 71.8 & 53.1 & 45.5 & 69.5 & 74.1 & 79.2 & 61.6 & 65.7 \\
    CLVR (4B) & 87.7 & 64.3 & 54.2 & 83.8 & 86.0 & 88.3 & 69.9 & 76.3 \\
    \midrule
    FLUX.2 9B [Klein] base \cite{flux-2} & 77.2 & 64.8 & 55.5 & 76.7 & 78.3 & 86.4 & 70.0 & 72.7 \\
    CLVR (9B) & \textbf{89.3} & 73.4 & 67.6 & 87.0 & 88.2 & \textbf{94.0} & \underline{75.0} & 82.1 \\
    \bottomrule
    \end{tabular}
    }
  \end{table}

  \begin{table*}[t]
    \centering
    \small
    \caption{Ablation study for proposed techniques. "rewrite" refers to the prompt rewrite operation by Qwen3-VL 8B. }
    \label{tab:exp-ablation-geneval-wisebench}
    
    \text{Panel A: ablation on GenEval}
    \resizebox{\textwidth}{!}{
    \begin{tabular}{lccccccc}
    \toprule
    \begin{tabular}[c]{@{}c@{}}Setting\end{tabular} &
      \begin{tabular}[c]{@{}c@{}}Single\\Object\end{tabular} &
      \begin{tabular}[c]{@{}c@{}}Two\\Objects\end{tabular} &
      \begin{tabular}[c]{@{}c@{}}Counting\end{tabular} &
      \begin{tabular}[c]{@{}c@{}}Colors\end{tabular} &
      \begin{tabular}[c]{@{}c@{}}Position\end{tabular} &
      \begin{tabular}[c]{@{}c@{}}Color\\Attributes\end{tabular} &
      \begin{tabular}[c]{@{}c@{}}Overall\end{tabular} \\
    \midrule
    FLUX.2 4B Distill & 0.99 & 0.91 & 0.77 & 0.86 & 0.69 & 0.64 & 0.81 \\
    FLUX.2 4B Distill rewrite & 0.99 & 0.86 & 0.66 & 0.83 & 0.64 & 0.68 & 0.78 \\
    FLUX.2 4B + CLVR +PPRL +DSWM & 0.99 & 0.92 & 0.85 & 0.89 & 0.85 & 0.71 & 0.87 \\
    \midrule
    FLUX.2 4B base & 0.99 & 0.87 & 0.62 & 0.85 & 0.52 & 0.59 & 0.74 \\
    FLUX.2 4B base + CLVR (w/o PPRL) & 0.96 & 0.88 & 0.76 & 0.81 & 0.71 & 0.57 & 0.78 \\
    FLUX.2 4B base + CLVR (w PPRL) & 0.95 & 0.89 & 0.75 & 0.90 & 0.74 & 0.72 & 0.83 \\
    FLUX.2 4B DSWM + CLVR (w PPRL) & 0.99 & 0.92 & 0.85 & 0.89 & 0.85 & 0.71 & 0.87 \\
    \bottomrule
    \end{tabular}
    }
    
    \vspace{8pt}
    \text{Panel B: ablation on WiseBench.}
    \resizebox{\textwidth}{!}{
    \begin{tabular}{lccccccc}
    \toprule
    Setting & Cultural & Time & Space & Biology & Physics & Chemistry & Overall \\
    \midrule
    FLUX.2 4B Distill & 0.42 & 0.52 & 0.64 & 0.47 & 0.57 & 0.41 & 0.48 \\
    FLUX.2 4B Distill rewrite & 0.64 & 0.60 & 0.75 & 0.61 & 0.67 & 0.55 & 0.64 \\
    FLUX.2 4B DSWM + CLVR (w PPRL) & 0.68 & 0.69 & 0.82 & 0.71 & 0.78 & 0.60 & 0.71 \\
    \bottomrule
    \end{tabular}
    }
  
    \end{table*}
    
\section{Experiment}

\subsection{Implementation details}

In our experimental setup, the VLM controller of CLVR is fixed to use the Qwen3-VL 8B model \cite{Qwen3-VL}, while the diffusion model employs the FLUX.2 Klein 4B and 9B models \cite{flux-2}. During the supervised fine-tuning (SFT) stage, both the diffusion model and the VLM are fully fine-tuned. In contrast, for the reinforcement learning (RL) stage, we employ LoRA fine-tuning for stability. For the base models, the sampling steps are fixed at 28, with a classifier-free guidance (CFG) scale of 4, using the Euler sampler. For distilled models and models utilizing DSWM, we use 4 sampling steps without CFG. Detailed settings are provided in the Appendix \ref{sec:appendix_inference_protocol_en}.

\subsection{Main results on standard T2I benchmarks}

We evaluate our method on five comprehensive benchmarks: GenEval \cite{geneval}, GenEval++ \cite{Echo-4o}, ImagineBench \cite{Echo-4o}, PRISM \cite{PRISM-Bench}, and WiseBench \cite{niu2025wise}. We compare the CLVR method against a wide spectrum of open-source models and unified multimodal models (e.g., SD3.5 \cite{SD3.5}, T2I-R1 \cite{T2I-R1}, Uni-CoT \cite{UniCoT}). Proprietary models like GPT-4o \cite{GPT-4o} and Gemini 2.5 \cite{Gemini25} are included as upper-bound references. For ImagineBench and GenEval++, due to space constraints, we present the detailed results in Appendix \ref{sec:appendix_additional_benchmarks}.

As shown in Tables \ref{tab:exp-geneval-detailed} and \ref{tab:exp-genevalpp-imagine-detailed}, our CLVR (9B) substantially outperforms the FLUX.2 baseline. Notably, on GenEval, CLVR explicitly surpasses recent reasoning-enhanced methods like Uni-CoT and T2I-R1, with notable improvements in complex compositional categories (e.g., spatial positioning, counting, and multi-object generation).

On ImagineBench and PRISM (Table \ref{tab:exp-genevalpp-imagine-detailed} and Table \ref{tab:exp-prism-detailed}), CLVR (9B) reaches overall scores of 8.830 and 82.1 respectively. On PRISM, it outperforms the strongest open-source baseline in our comparison (Qwen-Image, 79.9) by 2.2 points while narrowing the gap to GPT-4o (86.3). Furthermore, on WiseBench (Table \ref{tab:exp-wisebench-detailed}), which emphasizes broad knowledge-grounded generation, our model achieves 0.76, closely approaching the GPT-4o upper bound (0.80).

\subsection{Empirical capacity ceiling of single-step generation}
\label{sec:probe}

We hypothesize that single-step generation paradigms face an inherent performance ceiling on complex semantics, bounded by model capacity. To break this ceiling without simply scaling up the model, we introduce CLVR. To empirically validate this, we design a diagnostic Semantic Complexity Scaling Probe. Further experimental details can be found in the Appendix \ref{sec:probe_appendix}. The probe stratifies prompts into 10 complexity tiers ($C_{\text{task}}$) based on entities, relations, and hard constraints. We evaluate performance using the Area Under the Pass-Complexity Curve ($\text{AUC}_{\text{pass}}$) via LLM-as-a-judge. To correlate performance with capacity, we compute a spectral capacity proxy ($I_{\text{eff}}$) using SVD on core weight layers \cite{I_eff}, which better reflects effective feature space than raw parameter counts.

According to the results in Figure \ref{fig:motivation_exp}, as $C_{\text{task}}$ increases, single-step baselines degrade sharply, requiring exponential $I_{\text{eff}}$ scaling for marginal gains. In contrast, CLVR maintains a resilient pass rate across high-complexity tiers. Compared to FLUX.2, CLVR improves $\text{AUC}_{\text{pass}}$, mitigating the structural capacity ceiling without expanding the DiT backbone.

\begin{figure}[t]
  \centering
  \includegraphics[width=\linewidth]{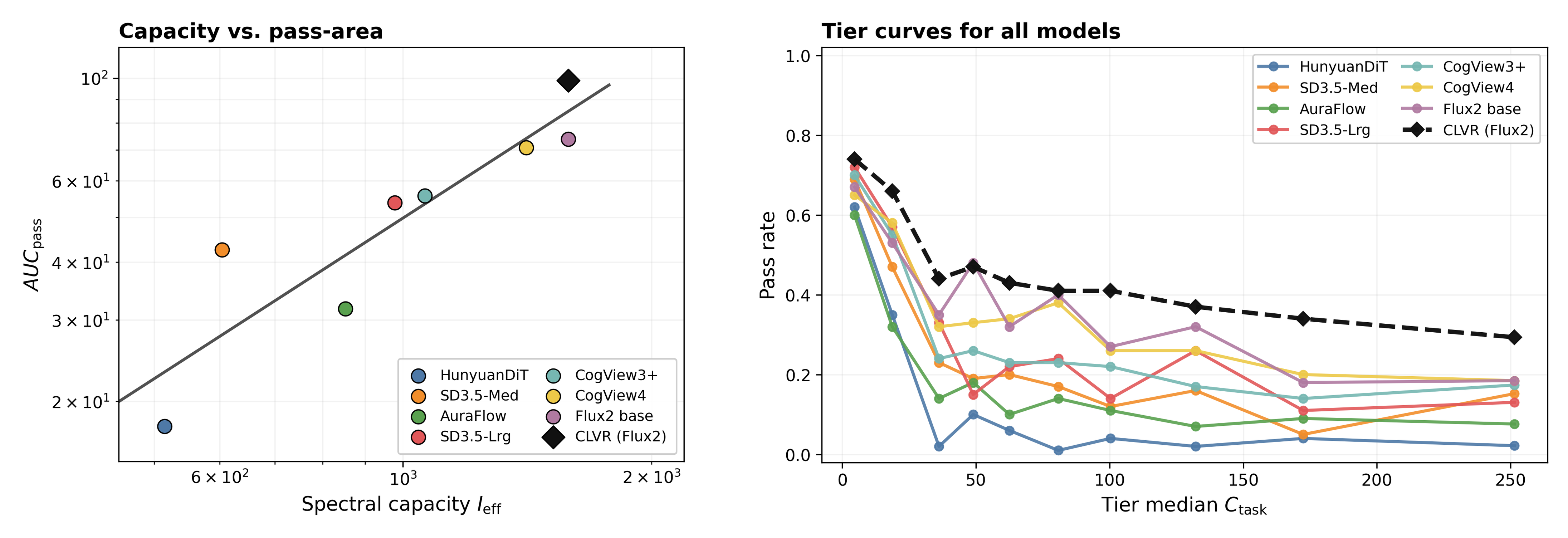}
  \caption{Quantitative results of the semantic complexity probe. $C_{\text{task}}$ stratifies prompt difficulty (entities, relations, and hard constraints); $I_{\text{eff}}$ is an entropy effective-rank spectral proxy for backbone capacity~\cite{I_eff}. The left plot shows that CLVR achieves a higher Area Under the Pass-Complexity Curve (AUC Pass) compared to single-step models of similar capacity. The right plot illustrates that while single-step models' performance drops sharply as task complexity increases, CLVR maintains a high pass rate across all tiers.}
  \label{fig:motivation_exp}
\end{figure}

\subsection{Ablation studies}

\paragraph{Prompt Rewrite vs. CLVR}

A natural question is whether the performance gains merely stem from the VLM rewriting the prompt into a more descriptive format, leveraging its inherent superior semantic understanding. In Table \ref{tab:exp-ablation-geneval-wisebench} (Panel A, B), we compare the FLUX.2 4B Distill baseline with an open-loop variant that uses Qwen3-VL 8B for prompt rewriting. 

Open-loop prompt rewriting improves WiseBench scores (0.48 to 0.64) by providing external knowledge, but degrades GenEval scores (0.81 to 0.78) where instructions are already explicit. In contrast, CLVR achieves superior scores on both benchmarks (0.87 on GenEval, 0.74 on WiseBench). This confirms that CLVR's gains extend beyond the VLM's semantic enrichment. Instead, the improvements stem fundamentally from (1) task decomposition, which avoids forcing the model into one-shot generation and raises its capacity ceiling, and (2) iterative visual self-correction, which ensures each generative step remains within the diffusion model's reliable execution boundaries. This explains why our method comprehensively outperforms open-loop rewriting.

\paragraph{Training Alignment and Deployment Acceleration}

We ablate the training and deployment mechanisms to validate their individual contributions (Table \ref{tab:exp-ablation-geneval-wisebench}, Panel A). Starting from the FLUX.2 4B base (0.74 on GenEval), applying CLVR via SFT alone improves the score to 0.78. Further applying PPRL boosts the score to 0.83, validating that proxy prompts effectively translate long-context histories into explicit reward signals for stable alignment.

Finally, employing DSWM yields a GenEval score of 0.87, surpassing both the RL-only (0.83) and distill-only (0.81) baselines. This confirms that alignment and distillation priors merge compatibly without destructive interference. By reusing the distillation prior, DSWM reduces iterative denoising from $28 \times 2$ to just 4 NFEs (number of function evaluations) per step, effectively resolving the computational bottleneck of closed-loop reasoning. We also provide an end-to-end accelerate analysis in Appendix \ref{sec:appendix_inference_acceleration_en}.

\paragraph{Qualitative Cases and Visual Comparisons}

To intuitively illustrate the mechanics of our framework, we provide detailed real-world Closed-Loop Visual Reasoning (CLVR) trajectories in the Appendix, Figure \ref{fig:trace_showcase}, showcasing how a complex prompt is systematically executed across multiple reasoning and generation steps. Furthermore, Figure \ref{fig:compare_w_baseline} presents qualitative comparisons between our method (CLVR 4B) and strong baselines (e.g., FLUX.2 4B, Uni-CoT, Nano Banana 2 \cite{NanoBanana2}, and Qwen-Image) on challenging prompts. The results show that our framework approaches state-of-the-art proprietary instruction following and improves over open-source baselines.

\section{Conclusion}

To address the capability degradation of single-step generation and the diminishing returns of parameter scaling when text-to-image (T2I) models handle complex semantics, we propose Closed-Loop Visual Reasoning (CLVR). We construct a system-level solution across four dimensions: data, training, inference, and deployment. (1) Reliable Data Synthesis: We eliminate ungrounded planning hallucinations via a state-constrained controller and step-level visual verification, guaranteeing high-quality training trajectories. (2) Long-Context Alignment: We propose Proxy Prompt Reinforcement Learning (PPRL) to distill interleaved image-text histories into explicit single-step reward signals, overcoming optimization bottlenecks in long-horizon planning. (3) Globally Consistent Inference: We introduce trajectory-accumulative conditioning to retain multimodal historical memory, effectively mitigating the loss of long-horizon dependencies. (4) Efficient Model Deployment: We propose $\Delta$-Space Weight Merge (DSWM) to decouple CLVR training from distillation acceleration, reducing the denoising cost to 4 NFEs and eliminating the need for expensive end-to-end re-distillation. Extensive experiments demonstrate that CLVR achieves significant improvements across multiple benchmarks.



{\small
\bibliographystyle{plainnat}
\bibliography{references}
}


\appendix

\section{Technical appendices and supplementary material}

\subsection{Local Analysis and Geometric Motivation for Weight Merge} \label{sec:appendix_weight_merge}

This section provides a local first-order analysis and geometric motivation for $\Delta$-Space Weight Merge (DSWM) mechanism introduced in the main text. Throughout this section, $\Delta \mathbf{f}$ denotes the first-order change in the model output for a fixed noisy state $(\mathbf{x}_t,t)$ induced by a parameter perturbation around $\mathbf{W}_{\text{base}}$. The following proofs demonstrate why the weight increments from distillation ($\Delta \mathbf{W}_{\text{distill}}$) and alignment fine-tuning ($\Delta \mathbf{W}_{\text{Align}}$) can be linearly added in the parameter space while maintaining approximate geometric decoupling in the output space. Following the definition in the main text, we define the alignment increment as $\Delta \mathbf{W}_{\text{Align}} = \Delta \mathbf{W}_{\text{SFT}} + \Delta \mathbf{W}_{\text{RL}}$, treating both supervised fine-tuning (SFT) and reinforcement learning (RL) as unified alignment operations.

\textbf{Proof Outline:} We establish the theoretical validity of DSWM by demonstrating that distillation and alignment updates occur in approximately orthogonal subspaces (normal and tangent spaces, respectively). First, we assume local linear superposition, allowing weight increments to be treated additively. Second, we demonstrate that distillation paradigms introduce parameter updates along the \textit{normal space} of the data manifold. In contrast, alignment fine-tuning (particularly RL) introduces updates primarily within the \textit{tangent space}. This geometric view helps explain why the two updates can be merged with limited interference in our evaluated setting, as further supported by the ablation results.

\subsubsection{Linear Superposition under First-Order Perturbation}
\label{appendix:proof_prop1}

\begin{hypothesis}[Local Linear Perturbation] \label{hyp:linear}
Let the prediction function of the base diffusion model be $f(\mathbf{x}_t, t; \mathbf{W})$. We assume that $f$ is at least twice continuously differentiable with respect to its parameters $\mathbf{W}$ in the neighborhood of the base weights $\mathbf{W}_{\text{base}}$. We further assume that the magnitudes of the parameter increments induced by distillation ($\Delta \mathbf{W}_{\text{distill}}$) and alignment ($\Delta \mathbf{W}_{\text{Align}}$) are sufficiently small such that they lie within the local linear perturbation region, allowing us to safely truncate the second-order Taylor remainder $\mathcal{O}(\|\Delta \mathbf{W}\|^2)$.
\end{hypothesis}

\textbf{Empirical Evidence for Hypothesis~\ref{hyp:linear}:} To validate the "local small perturbation" assumption, we measured parameter space deviations on the FLUX.2 4B model series. We adopt the global relative Frobenius norm to measure the weight shift. Specifically, for a model comprising $K$ tensors, the deviation between a reference model with weights $\mathbf{W}^{(1)}$ and a perturbed model with weights $\mathbf{W}^{(2)}$ (where the weight difference is denoted as $\Delta \mathbf{W} = \mathbf{W}^{(2)} - \mathbf{W}^{(1)}$) is defined as:

\begin{equation}
R(\mathbf{W}^{(1)}, \mathbf{W}^{(2)}) = \|\Delta \mathbf{W}\|_F / \|\mathbf{W}^{(1)}\|_F = \sqrt{\sum_{k=1}^K \|\Delta \mathbf{W}_k\|_F^2} / \sqrt{\sum_{k=1}^K \|\mathbf{W}_k^{(1)}\|_F^2}, 
\end{equation}

which measures the ratio of the deviation to the total parameter energy of the reference model. Empirical results demonstrate that the global relative Frobenius norm shift induced by the distilled model relative to $\mathbf{W}_{\text{base}}$ is approximately $2.79\%$; the shift caused by full-parameter SFT is approximately $2.30\%$; and the effective weight increment $\Delta \mathbf{W}_{\text{RL}} = \frac{\alpha}{r}\mathbf{B}\mathbf{A}$ induced by RL LoRA fine-tuning on top of the SFT base results in an even lower relative shift of approximately $0.0075\%$. These solid quantitative engineering data indicate that the model weight updates induced by both distillation and the entire alignment phase (SFT and RL) indeed constitute only a minimal local shift in the parameter space. This provides sufficient empirical support for safely truncating the higher-order remainder $\mathcal{O}(\|\Delta \mathbf{W}\|^2)$ in our subsequent derivations via Taylor expansion.

\begin{proposition}[Additive Approximation of Output Increments] \label{prop:additive}
Under Hypothesis~\ref{hyp:linear}, the output increment of the merged model can be approximately decomposed into a linear superposition of the increments from the two independent tasks:
\begin{equation}
    f(\mathbf{W}_{\text{base}} + \Delta \mathbf{W}_{\text{distill}} + \Delta \mathbf{W}_{\text{Align}}) \approx f(\mathbf{W}_{\text{base}}) + \Delta \mathbf{f}_{\text{distill}} + \Delta \mathbf{f}_{\text{Align}}.
\end{equation}
\end{proposition}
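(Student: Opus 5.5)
The plan is a first-order Taylor expansion of $f(\mathbf{x}_t,t;\cdot)$ around $\mathbf{W}_{\text{base}}$, with the noisy state $(\mathbf{x}_t,t)$ held fixed, using the $C^2$ regularity granted by Hypothesis~\ref{hyp:linear}. Write $J = \nabla_{\mathbf{W}} f(\mathbf{x}_t,t;\mathbf{W}_{\text{base}})$ for the Jacobian. Define the task increments by linearization, $\Delta \mathbf{f}_{\text{distill}} := J\,\Delta \mathbf{W}_{\text{distill}}$ and $\Delta \mathbf{f}_{\text{Align}} := J\,\Delta \mathbf{W}_{\text{Align}}$. This matches the convention stated at the start of Appendix~\ref{sec:appendix_weight_merge}, where $\Delta\mathbf{f}$ denotes the first-order output change. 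Since $\Delta \mathbf{W}_{\text{Align}} = \Delta \mathbf{W}_{\text{SFT}} + \Delta \mathbf{W}_{\text{RL}}$, linearity of $J$ also splits the alignment term into its SFT and RL parts, though the proof does not need this.

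The key steps, in order, are as follows.

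\textbf{Step 1: Taylor expansion of the merged model.} By Taylor's theorem with integral (or Lagrange) remainder, applied to the combined perturbation $\Delta\mathbf{W} = \Delta \mathbf{W}_{\text{distill}} + \Delta \mathbf{W}_{\text{Align}}$,
\[
f(\mathbf{W}_{\text{base}}+\Delta\mathbf{W}) = f(\mathbf{W}_{\text{base}}) + J\,\Delta\mathbf{W} + \mathbf{r}(\Delta\mathbf{W}),
\]
with $\|\mathbf{r}(\Delta\mathbf{W})\| \le \tfrac12 M\|\Delta\mathbf{W}\|^2$. Here $M$ bounds the parameter Hessian on the segment joining $\mathbf{W}_{\text{base}}$ to $\mathbf{W}_{\text{base}}+\Delta\mathbf{W}$. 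The bound $M$ is finite because $f$ is $C^2$ on a compact neighborhood.

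\textbf{Step 2: Split the linear term.} By linearity, $J\,\Delta\mathbf{W} = \Delta \mathbf{f}_{\text{distill}} + \Delta \mathbf{f}_{\text{Align}}$, which gives exactly the claimed right-hand side.

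\textbf{Step 3: Control the error.} By the triangle inequality, $\|\Delta\mathbf{W}\|^2 \le 2\|\Delta \mathbf{W}_{\text{distill}}\|^2 + 2\|\Delta \mathbf{W}_{\text{Align}}\|^2$, so the approximation error is $\mathcal{O}(\|\Delta \mathbf{W}_{\text{distill}}\|^2 + \|\Delta \mathbf{W}_{\text{Align}}\|^2)$.

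\textbf{Step 4: Comparison with the individually fine-tuned models.} I would also show that the same conclusion holds if the increments are instead defined as actual output differences, $f(\mathbf{W}_{\text{base}}+\Delta\mathbf{W}_{\bullet}) - f(\mathbf{W}_{\text{base}})$. Expanding each one separately shows it differs from $J\Delta\mathbf{W}_\bullet$ by $\mathcal{O}(\|\Delta\mathbf{W}_\bullet\|^2)$. More sharply, the discrepancy between the merged model and the sum of the individual models is governed by the cross term $\Delta \mathbf{W}_{\text{distill}}^{\top} H\, \Delta \mathbf{W}_{\text{Align}}$, where $H$ is the parameter Hessian. This quadratic cross term is the interference that Hypothesis~\ref{hyp:linear} allows us to discard.

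The main obstacle is making ``$\approx$'' meaningful. Absolute smallness of the remainder is not enough: the remainder must be small \emph{relative to} the retained first-order terms $\Delta \mathbf{f}_{\text{distill}}$ and $\Delta \mathbf{f}_{\text{Align}}$. Otherwise, if $J$ nearly annihilates the increments, the linear terms could be swamped by the remainder. I would state the result as $\|\mathbf{r}\| / \|J\Delta\mathbf{W}\| = \mathcal{O}(M\|\Delta\mathbf{W}\|/\sigma)$, where $\sigma$ is a lower bound on the gain of $J$ along the increment directions. I would then treat this ratio as small by appeal to the measured relative Frobenius shifts (about $2.79\%$ for distillation, $2.30\%$ for SFT, and $0.0075\%$ for RL). I would acknowledge that this final link from the measurements to the ratio bound is empirical, not proved, consistent with the hypothesis being stated as an assumption.
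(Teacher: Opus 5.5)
Your proposal is correct and follows the paper's proof. Both expand $f$ to first order about $\mathbf{W}_{\text{base}}$ in the combined perturbation, set $\Delta\mathbf{f}_{\bullet} = \mathbf{J}_{\mathbf{W}}\Delta\mathbf{W}_{\bullet}$, split the linear term by linearity of the Jacobian, and drop the $\mathcal{O}(\|\Delta\mathbf{W}\|^2)$ remainder under Hypothesis~\ref{hyp:linear}. Your explicit remainder bound, the Hessian cross-term comparison with the individually fine-tuned models, and your point that the remainder must be small relative to $\mathbf{J}_{\mathbf{W}}\Delta\mathbf{W}$ are refinements the paper omits, but they do not change the argument.
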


\begin{proof}
Given the base model weights $\mathbf{W}_{\text{base}}$, we perform a first-order Taylor expansion on the output function $f(\mathbf{x}_t, t; \mathbf{W})$ around $\mathbf{W}_{\text{base}}$ for the combined perturbation $\Delta \mathbf{W}_{\text{merge}} = \Delta \mathbf{W}_{\text{distill}} + \Delta \mathbf{W}_{\text{Align}}$:
\begin{equation}
    f(\mathbf{W}_{\text{base}} + \Delta \mathbf{W}_{\text{distill}} + \Delta \mathbf{W}_{\text{Align}}) = f(\mathbf{W}_{\text{base}}) + \mathbf{J}_{\mathbf{W}} (\Delta \mathbf{W}_{\text{distill}} + \Delta \mathbf{W}_{\text{Align}}) + \mathcal{O}(\|\Delta \mathbf{W}\|^2),
\end{equation}
where $\mathbf{J}_{\mathbf{W}} = \nabla_{\mathbf{W}} f$ is the Jacobian matrix evaluated at $\mathbf{W}_{\text{base}}$. We define the approximate output increments for each independent fine-tuning task as:
\begin{equation}
    \Delta \mathbf{f}_{\text{distill}} \triangleq \mathbf{J}_{\mathbf{W}} \Delta \mathbf{W}_{\text{distill}}, \quad \Delta \mathbf{f}_{\text{Align}} \triangleq \mathbf{J}_{\mathbf{W}} \Delta \mathbf{W}_{\text{Align}}.
\end{equation}
According to the linearity of matrix multiplication, we have $\mathbf{J}_{\mathbf{W}} (\Delta \mathbf{W}_{\text{distill}} + \Delta \mathbf{W}_{\text{Align}}) = \mathbf{J}_{\mathbf{W}} \Delta \mathbf{W}_{\text{distill}} + \mathbf{J}_{\mathbf{W}} \Delta \mathbf{W}_{\text{Align}}$. Based on Hypothesis~\ref{hyp:linear}, we truncate the higher-order infinitesimal term $\mathcal{O}(\|\Delta \mathbf{W}\|^2)$ to obtain:
\begin{equation}
    f(\mathbf{W}_{\text{base}} + \Delta \mathbf{W}_{\text{merge}}) \approx f(\mathbf{W}_{\text{base}}) + \Delta \mathbf{f}_{\text{distill}} + \Delta \mathbf{f}_{\text{Align}}.
\end{equation}
This concludes the proof.
\end{proof}

\subsubsection{Compatibility for Distribution-based Distillation}
\label{appendix:proof_prop2}

In this section, we attempt to prove the merge compatibility for distribution-based distillation method such as \cite{ADD, DMD2, DMD}.

\begin{hypothesis}[Manifold Geometry and Healthy Optimization Process] \label{hyp:manifold}
We assume that the true data distribution primarily resides on a low-dimensional manifold $\mathcal{M} \subset \mathbb{R}^d$. For the alignment fine-tuning phase, we assume the optimization process is healthy (i.e., devoid of reward hacking) and is regularized by the Kullback-Leibler (KL) divergence. This implies that the alignment optimization only redistributes the probability density along the surface of the manifold $\mathcal{M}$, without generating destructive noise or meaningless data that deviates from the manifold.
\end{hypothesis}

\begin{proposition}[Normal-Tangent Approximate Decoupling] \label{prop:normal_tangent_appendix}
Under Hypotheses~\ref{hyp:linear} and \ref{hyp:manifold}, the output increment from distribution-matching distillation ($\Delta \mathbf{f}_{\text{distill}}$) essentially acts as a shortest-path projection operator that pulls deviated states back to the manifold, whose dominant component lies in the normal space $N\mathcal{M}$. Conversely, the alignment increment ($\Delta \mathbf{f}_{\text{Align}}$) is expected to be concentrated near the tangent space $T\mathcal{M}$ of the manifold. Therefore, they are approximately orthogonal in the manifold geometry:
\begin{equation}
    \langle \Delta \mathbf{f}_{\text{distill}}, \Delta \mathbf{f}_{\text{Align}} \rangle \approx 0, \quad \forall \mathbf{x} \in \mathcal{M}.
\end{equation}
\end{proposition}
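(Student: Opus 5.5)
The plan is to work at a fixed noisy state $(\mathbf{x}_t,t)$ and use Proposition~\ref{prop:additive} to identify each output increment with its first-order form $\Delta\mathbf{f}_{\bullet}=\mathbf{J}_{\mathbf{W}}\Delta\mathbf{W}_{\bullet}$. I would then characterize each increment geometrically through the training objective that produced it. Near $\mathcal{M}$, take local coordinates in which $\mathbb{R}^d = T_{\mathbf{x}}\mathcal{M}\oplus N_{\mathbf{x}}\mathcal{M}$, with orthogonal projectors $P_T$ and $P_N$. Split each increment as $\Delta\mathbf{f}=P_T\Delta\mathbf{f}+P_N\Delta\mathbf{f}$. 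Expanding the inner product gives
\begin{equation}
\langle \Delta\mathbf{f}_{\text{distill}},\Delta\mathbf{f}_{\text{Align}}\rangle=\langle P_T\Delta\mathbf{f}_{\text{distill}},P_T\Delta\mathbf{f}_{\text{Align}}\rangle+\langle P_N\Delta\mathbf{f}_{\text{distill}},P_N\Delta\mathbf{f}_{\text{Align}}\rangle .
\end{equation}
The goal is to show that both terms are second order small. For the first term, $P_T\Delta\mathbf{f}_{\text{distill}}$ must be small. For the second, $P_N\Delta\mathbf{f}_{\text{Align}}$ must be small. A Cauchy–Schwarz bound then finishes the argument.

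\textbf{Distillation step.} I would write the distribution-matching gradient (DMD/ADD style) as a difference of scores, $\nabla_{\mathbf{x}}\log p_{\text{fake}}-\nabla_{\mathbf{x}}\log p_{\text{real}}$, evaluated at a noised version of the few-step generator output. Near a low-dimensional manifold, the smoothed real score behaves like $-(\mathbf{x}-\Pi_{\mathcal{M}}(\mathbf{x}))/\sigma_t^2$ plus an $O(1)$ tangential term, where $\Pi_{\mathcal{M}}$ denotes nearest-point projection. This follows from a Laplace-type expansion of the Gaussian-smoothed density. The displacement $\mathbf{x}-\Pi_{\mathcal{M}}(\mathbf{x})$ lies in $N\mathcal{M}$. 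The normal component therefore scales like $\sigma_t^{-2}\,\mathrm{dist}(\mathbf{x},\mathcal{M})$, while the tangential residual stays bounded. The output change that distillation induces, which is what $\mathbf{J}_{\mathbf{W}}\Delta\mathbf{W}_{\text{distill}}$ approximates under Hypothesis~\ref{hyp:linear}, is then dominated by $P_N$. This gives $\|P_T\Delta\mathbf{f}_{\text{distill}}\|\le\epsilon_1\|\Delta\mathbf{f}_{\text{distill}}\|$, with $\epsilon_1\to0$ as $\sigma_t\to0$. This is the "shortest-path projection" statement.

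\textbf{Alignment step.} Under Hypothesis~\ref{hyp:manifold}, the aligned model distribution stays supported on $\mathcal{M}$ and only reweights density along it. The KL regularizer keeps the reweighting factor $e^{r/\beta}$ smooth and bounded. The perturbed score $\nabla\log(p\,e^{r/\beta}/Z)$ then differs from the base score by $\beta^{-1}\nabla r$ restricted to $\mathcal{M}$, which is a tangent field. It picks up no $\sigma_t^{-2}$ normal term, because the support does not move. So $\|P_N\Delta\mathbf{f}_{\text{Align}}\|\le\epsilon_2\|\Delta\mathbf{f}_{\text{Align}}\|$. Combining the two bounds gives $|\langle\cdot,\cdot\rangle|\le(\epsilon_1+\epsilon_2)\|\Delta\mathbf{f}_{\text{distill}}\|\|\Delta\mathbf{f}_{\text{Align}}\|$, which is the claimed approximate orthogonality.

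\textbf{Main obstacle.} The hardest step will be the transfer from the score and gradient field to the actual network output increment. The parameter update is trained on an expectation over states, and $\mathbf{J}_{\mathbf{W}}$ need not preserve the tangent/normal split state by state. My first attempt would assume that, in the linearized (NTK-like) regime, the output increment is approximately the kernel-smoothed functional gradient. I would then require the kernel to be approximately local, so that it does not mix normal and tangent directions beyond an $O(\epsilon)$ error. I would state this explicitly as an additional mild assumption rather than trying to derive it.
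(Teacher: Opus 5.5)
This is essentially the paper's own argument: you read the distillation increment off the distribution-matching score gradient, whose small-noise asymptotics near $\mathcal{M}$ reduce to the nearest-point pull $\sigma^{-2}(\pi_{\mathcal{M}}(\hat{\mathbf{x}})-\hat{\mathbf{x}})\in N\mathcal{M}$; the KL-regularized alignment only reweights density along $\mathcal{M}$ and is therefore tangential; and $N\mathcal{M}\perp T\mathcal{M}$ gives approximate orthogonality, with your projector split plus Cauchy--Schwarz, the explicit $e^{r/\beta}$ reweighting, and the NTK-locality caveat being sharper versions of steps the paper handles informally (it simply posits $\Delta\mathbf{f}_{\text{distill}}\propto\nabla_{\mathbf{x}}\log p^{\sigma}_{\text{data}}(\hat{\mathbf{x}}_{\text{fast}})$). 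The one thing to tighten is the $\nabla\log p_{\text{fake}}$ term, which you introduce but never dispose of: at the same noised point it exerts its own $\sigma_t^{-2}$ pull toward the generator's support and largely cancels the noise-induced part of the real-score pull, so normal dominance needs the few-step output itself to lie a non-negligible distance off $\mathcal{M}$ (the paper's proviso that $\hat{\mathbf{x}}_{\text{fast}}$ deviates from the manifold), not merely $\sigma_t\to 0$.
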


\begin{proof}
\textbf{(1) Proof of Normal Space Residency ($\Delta \mathbf{f}_{\text{distill}} \in N\mathcal{M}$):}
Distribution-matching distillation (such as Distribution Matching Distillation, DMD) aims to minimize the reverse KL divergence between the generated distribution $p_{\theta}$ and the true distribution $p_{\text{data}}$:
\begin{equation}
    \mathcal{L}_{\text{DMD}} = D_{\text{KL}}(p_{\theta} \| p_{\text{data}}).
\end{equation}
The gradient of this loss with respect to a generated sample $\mathbf{x}$ is given by:
\begin{equation}
    \nabla_{\mathbf{x}} \mathcal{L}_{\text{DMD}} = \nabla_{\mathbf{x}} \log p_{\theta}(\mathbf{x}) - \nabla_{\mathbf{x}} \log p_{\text{data}}(\mathbf{x}).
\end{equation}
In practical training, a stop-gradient operation is typically applied to the true distribution score term. Thus, when a large-step single-step prediction $\hat{\mathbf{x}}_{\text{fast}}$ deviates from the manifold, the optimization direction of the model's output increment $\Delta \mathbf{f}_{\text{distill}}$ is primarily determined by the score of the true distribution, i.e.,
\begin{equation}
    \Delta \mathbf{f}_{\text{distill}} \propto \nabla_{\mathbf{x}} \log p_{\text{data}}^{\sigma}(\hat{\mathbf{x}}_{\text{fast}}).
\end{equation}

Utilizing Tweedie's formula, for an observation
\begin{equation}
    \mathbf{x} = \mathbf{z} + \mathbf{\epsilon}, \quad \text{where } \mathbf{\epsilon} \sim \mathcal{N}(\mathbf{0}, \sigma^2 \mathbf{I}),
\end{equation}
the posterior expectation of the true signal can be expressed by the score of the marginal distribution:
\begin{equation}
    \mathbb{E}[\mathbf{z} | \mathbf{x}] = \mathbf{x} + \sigma^2 \nabla_{\mathbf{x}} \log p^{\sigma}(\mathbf{x}).
\end{equation}
In the large-step generation scenario of single-step distillation, the model attempts to directly predict the clean image $\mathbf{x}_0$ at timestep $t=0$ from any given noisy state $\hat{\mathbf{x}}_{\text{fast}}$ in one step. Since it seeks to recover the original true data with little residual Gaussian corruption, its corresponding target distribution $p_{\text{data}}$ can be viewed through the small-noise limit of the smoothed density $p_{\text{data}}^{\sigma}$. According to the asymptotic theory of denoising autoencoders formulated by \cite{alain2014regularizedautoencoderslearndata}, when $\hat{\mathbf{x}}_{\text{fast}}$ lies in a local tubular neighborhood of the data manifold and $\sigma$ is sufficiently small, the normal component of the score of the smoothed distribution is dominated by the residual that pulls the point back toward the nearest manifold surface. Up to tangential score terms and curvature-dependent lower-order corrections, this yields the local approximation:
\begin{equation}
    \nabla_{\mathbf{x}} \log p_{\text{data}}^{\sigma}(\hat{\mathbf{x}}_{\text{fast}}) \approx \frac{1}{\sigma^2} (\pi_{\mathcal{M}}(\hat{\mathbf{x}}_{\text{fast}}) - \hat{\mathbf{x}}_{\text{fast}}),
\end{equation}
where $\pi_{\mathcal{M}}$ is the nearest Euclidean projection onto $\mathcal{M}$. According to the extremum condition in the calculus of variations, the shortest vector from a point outside the manifold to its nearest projected point must be orthogonal to the tangent space at the projected point. Therefore, $(\pi_{\mathcal{M}}(\hat{\mathbf{x}}_{\text{fast}}) - \hat{\mathbf{x}}_{\text{fast}}) \in N_{\pi_{\mathcal{M}}(\hat{\mathbf{x}}_{\text{fast}})} \mathcal{M}$, which implies that the dominant corrective component of $\Delta \mathbf{f}_{\text{distill}}$ resides in $N\mathcal{M}$.

\textbf{(2) Proof of Tangent Space Residency ($\Delta \mathbf{f}_{\text{Align}} \in T\mathcal{M}$):}
The alignment increment $\Delta \mathbf{W}_{\text{Align}} = \Delta \mathbf{W}_{\text{SFT}} + \Delta \mathbf{W}_{\text{RL}}$ encompasses both SFT and RL. The optimization objective of SFT is to maximize the log-likelihood on a high-quality human instruction data subset:
\begin{equation}
    \mathcal{L}_{\text{SFT}} = \mathbb{E}_{\mathbf{x} \sim p_{\text{data}}^{\text{high}}}[-\log p_{\theta}(\mathbf{x})].
\end{equation}
Meanwhile, the optimization objective of RL is typically to maximize the human preference reward under a strict KL divergence constraint:
\begin{equation}
    \mathcal{L}_{\text{RL}} = \max_{\theta} \mathbb{E}_{\mathbf{x} \sim p_{\theta}}[R(\mathbf{x})] - \beta D_{\text{KL}}(p_{\theta} \| p_{\text{base}}).
\end{equation}

The common characteristic of these two optimizations is that SFT only increases the likelihood in high-density regions of the true data manifold $\mathcal{M}$, while the KL penalty term in RL primarily forces the support set of the fine-tuned distribution not to deviate from the support set of the base model $p_{\text{base}}$ (i.e., the manifold $\mathcal{M}$). Simultaneously, the reward model $R(\mathbf{x})$ typically yields extremely high penalties in noisy regions deviating from the manifold. 

Consequently, driven by these two objective functions, the optimization behavior of the model merely redistributes the probability density along the surface of the manifold $\mathcal{M}$ to discover regions with higher aesthetics or superior reasoning quality. Although the true data manifold generally possesses curvature, under the small perturbation constraint (Hypothesis~\ref{hyp:linear}), the deviation from the manifold curvature caused by this displacement within the local linear approximation constitutes a higher-order infinitesimal $\mathcal{O}(\|\Delta \mathbf{f}_{\text{Align}}\|^2)$, which is negligible. Thus, the increment guided by alignment is primarily encouraged to remain close to the tangent space: $\Delta \mathbf{f}_{\text{Align}} \in T_{\mathbf{x}} \mathcal{M}$.

Because the normal space and the tangent space are orthogonal to each other ($N_{\mathbf{x}}\mathcal{M} \perp T_{\mathbf{x}}\mathcal{M}$), their inner product is zero, yielding $\langle \Delta \mathbf{f}_{\text{distill}}, \Delta \mathbf{f}_{\text{Align}} \rangle \approx 0$. This concludes the proof.
\end{proof}

\subsubsection{Compatibility for Trajectory-based Distillation}

In this section, we analyze the merge compatibility for trajectory-based distillation method, such as \cite{CM, sCM, pgd}.

\begin{hypothesis}[Time-Locality and Slowly Varying Alignment Field] \label{hyp:ode}
Let the probability flow ordinary differential equation (PF-ODE) corresponding to the base diffusion model be:
\begin{equation}
    d\mathbf{x} = \mathbf{V}(\mathbf{x}, t) dt.
\end{equation}
The alignment process introduces an additional vector field $\mathbf{U}(\mathbf{x}_t, t)$ into the base vector field. We assume that the effective action of $\mathbf{U}$ is concentrated within a specific training time window $\mathcal{I}_{\text{Align}} \subset [0, 1]$, and its variation with respect to time $t$ is locally slow within this window:
\begin{equation}
    \int_{\mathcal{I}_{\text{Align}}} \left\| \frac{\partial \mathbf{U}}{\partial t} \right\| dt \le \varepsilon_{\text{Align}}, \quad \mathbf{U}(\mathbf{x}_t, t) \approx \mathbf{0} \text{ for } t \notin \mathcal{I}_{\text{Align}}.
\end{equation}
\end{hypothesis}

\begin{proposition}[Bounded Truncation Error Decoupling of Large-Step Integration] \label{prop:truncation}
Under the premise of Hypothesis~\ref{hyp:ode}, the truncation error brought by the highly non-linear base PF-ODE is largely compensated by the distillation correction $\Delta \mathbf{f}_{\text{distill}}$. Meanwhile, due to the slow temporal variation of the additional vector field $\mathbf{U}$ injected by the alignment process, its continuous time integral can be approximated by a single-step estimate. Consequently, compared to the final image obtained via precise multi-step solving of the pure alignment model, the deviation of the final image generated by the merged model in a single step is primarily controlled by the fluctuation magnitude (temporal variation) of $\mathbf{U}$ within the training time window. This suggests a mechanism for mitigating the cumulative truncation-error effects across multiple sampling steps:
\begin{equation}
    \|\mathbf{x}_{\text{merge}} - \tilde{\mathbf{x}}_0\| = \mathcal{O}(\varepsilon_{\text{Align}}) + \mathcal{O}(\|\Delta \mathbf{W}\|^2).
\end{equation}
\end{proposition}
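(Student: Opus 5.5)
We compare two trajectories started from the same noisy initial state $\mathbf{x}_1$. The first is the reference $\tilde{\mathbf{x}}_0$, obtained by solving exactly the aligned PF-ODE $d\mathbf{x} = (\mathbf{V} + \mathbf{U})(\mathbf{x},t)\,dt$. The second is the one-step output $\mathbf{x}_{\text{merge}}$ of the merged network. The plan is to write the merged one-step map as the base one-step map plus two first-order corrections, and to bound the mismatch of each correction against its exact counterpart separately.

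\textbf{Step 1 (linearize the merged map).} By Proposition~\ref{prop:additive} under Hypothesis~\ref{hyp:linear}, the merged single-step output is
\[
\mathbf{x}_{\text{merge}} = \mathbf{x}_1 - f(\mathbf{W}_{\text{base}}) - \Delta\mathbf{f}_{\text{distill}} - \Delta\mathbf{f}_{\text{Align}} + \mathcal{O}(\|\Delta\mathbf{W}\|^2).
\]
This expansion is the sole source of the $\mathcal{O}(\|\Delta\mathbf{W}\|^2)$ term.

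\textbf{Step 2 (decompose the exact solution).} We write $\tilde{\mathbf{x}}_0 = \mathbf{x}_1 - \int_0^1 \mathbf{V}(\mathbf{x}_t,t)\,dt - \int_0^1 \mathbf{U}(\mathbf{x}_t,t)\,dt$, where $\mathbf{x}_t$ is the aligned trajectory. We then replace $\mathbf{x}_t$ by the base trajectory $\mathbf{x}^{\text{base}}_t$ in the $\mathbf{V}$-integral. Since $\mathbf{U}$ is itself first order in $\Delta\mathbf{W}_{\text{Align}}$, Grönwall's inequality gives $\|\mathbf{x}_t - \mathbf{x}^{\text{base}}_t\| = \mathcal{O}(\|\mathbf{U}\|)$. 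The resulting cross term is then $\mathcal{O}(\|\Delta\mathbf{W}\|^2)$, provided the Jacobian of $\mathbf{V}$ is bounded. We treat this boundedness as implicit in Hypothesis~\ref{hyp:linear}.

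\textbf{Step 3 (match the terms).} The distillation term is matched by the defining property of trajectory distillation (consistency/PGD): the distilled increment is trained so that $\mathbf{x}_1 - f(\mathbf{W}_{\text{base}}) - \Delta\mathbf{f}_{\text{distill}}$ reproduces $\mathbf{x}_1 - \int_0^1 \mathbf{V}\,dt$. This cancels the base truncation error up to the distillation residual, which we absorb into the higher-order term. The alignment term is matched by identifying $\Delta\mathbf{f}_{\text{Align}}$ at the single evaluated time with $|\mathcal{I}_{\text{Align}}|\,\mathbf{U}(\cdot,t^\ast)$ for some $t^\ast \in \mathcal{I}_{\text{Align}}$. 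The error is then
\[
\left\|\int_{\mathcal{I}_{\text{Align}}}\bigl(\mathbf{U}(\mathbf{x}_t,t)-\mathbf{U}(\mathbf{x}_{t^\ast},t^\ast)\bigr)\,dt\right\|.
\]
Using $\mathbf{U}\approx 0$ outside the window, we split this integrand into an explicit temporal variation and a spatial variation along the path. The temporal part is bounded by $|\mathcal{I}_{\text{Align}}|\int\|\partial_t\mathbf{U}\|\,dt \le \varepsilon_{\text{Align}}$. The spatial part is $\|\nabla_{\mathbf{x}}\mathbf{U}\|\cdot\|\mathbf{x}_t-\mathbf{x}_{t^\ast}\|$, which is $\mathcal{O}(\|\Delta\mathbf{W}\|)\cdot\mathcal{O}(1)$.

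\textbf{Main obstacle.} The spatial-variation contribution is the main obstacle. Taken naively it is only first order, because the state moves $\mathcal{O}(1)$ along the base flow inside the window. To rescue the claimed bound, we would reinterpret $\partial\mathbf{U}/\partial t$ in Hypothesis~\ref{hyp:ode} as the total derivative along the trajectory, $\tfrac{d}{dt}\mathbf{U}(\mathbf{x}_t,t)$. With that reading the full mismatch is bounded by $\varepsilon_{\text{Align}}$, and the triangle inequality yields $\mathcal{O}(\varepsilon_{\text{Align}}) + \mathcal{O}(\|\Delta\mathbf{W}\|^2)$. We would state this interpretation explicitly rather than try to derive it.
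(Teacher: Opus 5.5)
Your skeleton matches the paper's proof. You linearize via Proposition~\ref{prop:additive}, let the distilled increment cancel $\int_0^1\mathbf{V}\,d\tau$, identify $\Delta\mathbf{f}_{\text{Align}}$ with $-|\mathcal{I}_{\text{Align}}|\,\mathbf{U}(\mathbf{x}_{t^\ast},t^\ast)$, and charge the Riemann-sum error to $\varepsilon_{\text{Align}}$. Your ``main obstacle'' is a legitimate sharpening. The paper's last step compares $\mathbf{U}(\mathbf{x}_{t^\ast},t^\ast)$ with $\mathbf{U}(\tilde{\mathbf{x}}(\tau),\tau)$ at different states while invoking only temporal slowness, so it tacitly uses the total-derivative reading you propose to state. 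The genuine error is in Step 2. The cross term $\int_0^1[\mathbf{V}(\mathbf{x}_t,t)-\mathbf{V}(\mathbf{x}^{\text{base}}_t,t)]\,dt$ is, to leading order, $\int_0^1\nabla_{\mathbf{x}}\mathbf{V}(\mathbf{x}^{\text{base}}_t,t)\,\delta\mathbf{x}_t\,dt$ with $\delta\mathbf{x}_t=\mathbf{x}_t-\mathbf{x}^{\text{base}}_t$. Gr\"onwall gives $\|\delta\mathbf{x}_t\|=\mathcal{O}(\|\Delta\mathbf{W}\|)$, and a \emph{bounded} Jacobian of $\mathbf{V}$ only multiplies this by an $\mathcal{O}(1)$ constant. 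The term is therefore first order, not $\mathcal{O}(\|\Delta\mathbf{W}\|^2)$. Nothing in Hypothesis~\ref{hyp:linear} makes $\nabla_{\mathbf{x}}\mathbf{V}$ small; the statement itself calls the base PF-ODE highly nonlinear. This is the same slip you caught for $\nabla_{\mathbf{x}}\mathbf{U}$, with the small and $\mathcal{O}(1)$ factors swapped. Your total-derivative reinterpretation cannot absorb it, because it concerns $\mathbf{V}$, not $\mathbf{U}$.

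Precisely, linearizing the aligned flow about the base flow gives
\[
\tilde{\mathbf{x}}_0=\mathbf{x}_0^{\text{base}}-\int_0^1\Phi(0,\tau)\,\mathbf{U}(\mathbf{x}^{\text{base}}_\tau,\tau)\,d\tau+\mathcal{O}(\|\Delta\mathbf{W}\|^2),
\]
where $\Phi(0,\tau)$ propagates $\dot{\mathbf{y}}=\nabla_{\mathbf{x}}\mathbf{V}(\mathbf{x}^{\text{base}}_t,t)\,\mathbf{y}$ from time $\tau$ to time $0$. Replacing this by $-\int_{\mathcal{I}_{\text{Align}}}\mathbf{U}\,d\tau$ drops $-\int_{\mathcal{I}_{\text{Align}}}(\Phi(0,\tau)-I)\,\mathbf{U}\,d\tau$. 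That term is first order and does not vanish with $\varepsilon_{\text{Align}}$ (take $\mathbf{U}$ constant in time) or with $\|\Delta\mathbf{W}\|^2$. To close the argument you need an explicit extra assumption, e.g.\ $\sup_{\tau\in\mathcal{I}_{\text{Align}}}\|\Phi(0,\tau)-I\|\ll 1$. Alternatively, carry a term $\mathcal{O}\bigl(\sup_{\tau\in\mathcal{I}_{\text{Align}}}\|\Phi(0,\tau)-I\|\,\|\Delta\mathbf{W}\|\bigr)$ in the final bound. The paper's proof has the same hole: it passes from $\mathbf{x}_1-\int_0^1\mathbf{V}(\tilde{\mathbf{x}}(\tau),\tau)\,d\tau$ to $\mathbf{x}_0^{\text{base}}$ with a bare ``$\approx$'', citing only that $\mathbf{U}$ vanishes outside the window. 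Stating it as an assumption would put you on par with the paper, whereas the Gr\"onwall justification you give is wrong. Likewise, the distillation residual in Step 3 has nothing to do with $\|\Delta\mathbf{W}\|^2$. Keep it as a separate, assumed-small term; the paper simply posits that distillation compensates the truncation error.
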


\begin{proof}
Single-step trajectory distillation learns a solver to eliminate the truncation error, such that:
\begin{equation}
    f(\mathbf{W}_{\text{base}} + \Delta \mathbf{W}_{\text{distill}})(\mathbf{x}_1) \approx \mathbf{x}_1 - \int_{0}^{1} \mathbf{V}(\mathbf{x}(\tau), \tau) d\tau = \mathbf{x}_0^{\text{base}}.
\end{equation}
For the pure alignment model, the generation endpoint under exact multi-step solving is:
\begin{equation}
    \tilde{\mathbf{x}}_0 = \mathbf{x}_1 - \int_{0}^{1} \mathbf{V}(\tilde{\mathbf{x}}(\tau), \tau) d\tau - \int_{0}^{1} \mathbf{U}(\tilde{\mathbf{x}}(\tau), \tau) d\tau.
\end{equation}
According to Hypothesis~\ref{hyp:ode}, the effect of the alignment field outside the time window $\mathcal{I}_{\text{Align}}$ is negligible; hence, the multi-step integral can be approximated as:
\begin{equation}
    \tilde{\mathbf{x}}_0 \approx \mathbf{x}_0^{\text{base}} - \int_{\mathcal{I}_{\text{Align}}} \mathbf{U}(\tilde{\mathbf{x}}(\tau), \tau) d\tau.
\end{equation}

For the directly merged model, utilizing the linear superposition principle established in Proposition~\ref{prop:additive}, its single-step large-step prediction is:
\begin{equation}
    f_{\text{merge}}(\mathbf{x}_1) \approx \mathbf{x}_0^{\text{base}} + \Delta \mathbf{f}_{\text{Align}}.
\end{equation}
Here, $\Delta \mathbf{f}_{\text{Align}}$ can be conceptualized as an approximation to the integral of the alignment field. If we anchor this approximation at a representative timestamp $t^\ast \in \mathcal{I}_{\text{Align}}$ within the training time window, it corresponds to multiplying a local representative value by the interval length:
\begin{equation}
    \Delta \mathbf{f}_{\text{Align}} \approx -|\mathcal{I}_{\text{Align}}| \mathbf{U}(\mathbf{x}_{t^\ast}, t^\ast).
\end{equation}

Given the assumption that the alignment vector field varies slowly over time (local slow variation, Hypothesis~\ref{hyp:ode}), the error introduced by substituting continuous integration with this single-point sampling is bounded primarily by its fluctuation over time, i.e., constrained by $\mathcal{O}(\varepsilon_{\text{Align}})$. Therefore, we have:
\begin{equation}
    \left\| -|\mathcal{I}_{\text{Align}}| \mathbf{U}(\mathbf{x}_{t^\ast}, t^\ast) - \left(- \int_{\mathcal{I}_{\text{Align}}} \mathbf{U}(\tilde{\mathbf{x}}(\tau), \tau) d\tau \right) \right\| \le \mathcal{O}(\varepsilon_{\text{Align}}).
\end{equation}
Combining the local approximations above, the deviation between the single-step output of the merged model and the multi-step output of the pure alignment model can be written as:
\begin{equation}
    \|f_{\text{merge}}(\mathbf{x}_1) - \tilde{\mathbf{x}}_0\| = \mathcal{O}(\varepsilon_{\text{Align}}) + \mathcal{O}(\|\Delta \mathbf{W}\|^2).
\end{equation}
This analysis suggests that combining the alignment vector field with the trajectory distillation operator can approximate the alignment-field integral through a single-step jump under local smoothness assumptions. This helps explain why the merged model can reduce severe truncation-error accumulation from large step sizes in our evaluated setting. This concludes the proof.
\end{proof}

\textbf{Remark (Physical Intuition and Empirical Relation):} The theoretical error bounds established above ($\mathcal{O}(\|\Delta \mathbf{W}\|^2)$ and $\mathcal{O}(\varepsilon_{\text{Align}})$) are fundamentally local approximations. In practical applications, as corroborated by our ablation experiments, $\Delta$-Space Weight Merge is compatible in our evaluated setting, significantly accelerating inference without the prerequisite of constructing expensive closed-loop distillation data. However, the efficacy of this method fundamentally relies on the structural similarity and parameter proximity between each fine-tuned model and the base model, which restricts its applicable boundaries when drastic weight changes occur. Our mathematical framework clearly delineates these boundaries, providing a robust theoretical foundation elucidating why this direct parameter fusion successfully operates during diffusion model deployment without substantial interference.

\begin{table*}[t]
  \centering
  \small
  \caption{Quantitative comparisons on GenEval++ and ImagineBench \cite{Echo-4o}.}
  \label{tab:exp-genevalpp-imagine-detailed}
  \resizebox{\textwidth}{!}{
  \begin{tabular}{lccccccccccccc}
  \toprule
  & \multicolumn{8}{c}{GenEval++} & \multicolumn{5}{c}{ImagineBench} \\
  \cmidrule(lr){2-9}\cmidrule(lr){10-14}
\begin{tabular}[c]{@{}c@{}}Model\end{tabular} &
\begin{tabular}[c]{@{}c@{}}Color\end{tabular} &
\begin{tabular}[c]{@{}c@{}}Count\end{tabular} &
\begin{tabular}[c]{@{}c@{}}Color\\Count\end{tabular} &
\begin{tabular}[c]{@{}c@{}}Color\\Pos\end{tabular} &
\begin{tabular}[c]{@{}c@{}}Pos\\Count\end{tabular} &
\begin{tabular}[c]{@{}c@{}}Pos\\Size\end{tabular} &
\begin{tabular}[c]{@{}c@{}}Multi\\Count\end{tabular} &
\begin{tabular}[c]{@{}c@{}}Overall\end{tabular} &
\begin{tabular}[c]{@{}c@{}}Attribute\\Shift\end{tabular} &
\begin{tabular}[c]{@{}c@{}}Spatio\\Temporal\end{tabular} &
\begin{tabular}[c]{@{}c@{}}Hybridization\end{tabular} &
\begin{tabular}[c]{@{}c@{}}Multi\\Object\end{tabular} &
\begin{tabular}[c]{@{}c@{}}Overall\end{tabular} \\
  \midrule
  FLUX.1-Kontext \cite{flux2024} & 0.425 & 0.500 & 0.200 & 0.250 & 0.300 & 0.400 & 0.325 & 0.343 & 5.330 & 6.490 & 5.480 & 5.340 & 5.620 \\
  FLUX.1-dev \cite{flux2024} & 0.350 & 0.625 & 0.150 & 0.275 & 0.200 & 0.375 & 0.225 & 0.314 & 5.680 & 7.130 & 6.380 & 5.240 & 6.060 \\
  GPT-4o \cite{GPT-4o} & 0.900 & 0.675 & 0.725 & 0.625 & 0.600 & 0.800 & 0.850 & \textbf{0.739} & 8.540 & 9.180 & 8.570 & 7.980 & 8.560 \\
  Janus-Pro \cite{Janus-Pro} & 0.450 & 0.300 & 0.125 & 0.300 & 0.075 & 0.350 & 0.125 & 0.246 & 5.300 & 7.280 & 6.730 & 6.040 & 6.220 \\
  
  OmniGen2 \cite{OmniGen2} & 0.550 & 0.425 & 0.200 & 0.275 & 0.125 & 0.250 & 0.450 & 0.325 & 5.280 & 7.450 & 6.290 & 6.310 & 6.220 \\
  BAGEL \cite{Bagel} & 0.325 & 0.600 & 0.250 & 0.325 & 0.250 & 0.475 & 0.375 & 0.371 & 5.370 & 6.930 & 6.500 & 6.410 & 6.200 \\
  T2I-R1 \cite{T2I-R1} & 0.675 & 0.325 & 0.200 & 0.350 & 0.075 & 0.250 & 0.300 & 0.311 & 5.850 & 7.700 & 7.360 & 6.680 & 6.780 \\
  Uni-CoT \cite{UniCoT} & 0.700 & 0.675 & 0.655 & 0.625 & 0.600 & 0.625 & 0.575 & 0.635 & 7.438 & 8.393 & 7.563 & 7.460 & 7.747 \\

  \midrule
  FLUX.2 4B [Klein] \cite{flux-2}  & 0.750 & 0.425 & 0.225 & 0.300 & 0.050 & 0.450 & 0.425 & 0.375 & 6.342 & 5.877 & 6.543 & 6.953 & 6.267 \\
  CLVR (4B) & 0.900 & 0.450 & 0.450 & 0.650 & 0.600 & 0.725 & 0.538 & 0.616 & 7.921 & 8.977 & 8.664 & 8.137 & 8.435 \\
  \midrule
  FLUX.2 9B [Klein] \cite{flux-2} & 0.750 & 0.450 & 0.250 & 0.000 & 0.050 & 0.275 & 0.375 & 0.307 & 6.669 & 8.097 & 7.360 & 7.130 & 7.274 \\
  CLVR (9B) & 0.875 & 0.625 & 0.600 & 0.750 & 0.575 & 0.700 & 0.700 & 0.689 & 8.357 & 9.537 & 8.833 & 8.750 & \textbf{8.830} \\
  \bottomrule
  \end{tabular}
  }
  \end{table*}

  \begin{figure}[t]
    \centering
    \includegraphics[width=\linewidth]{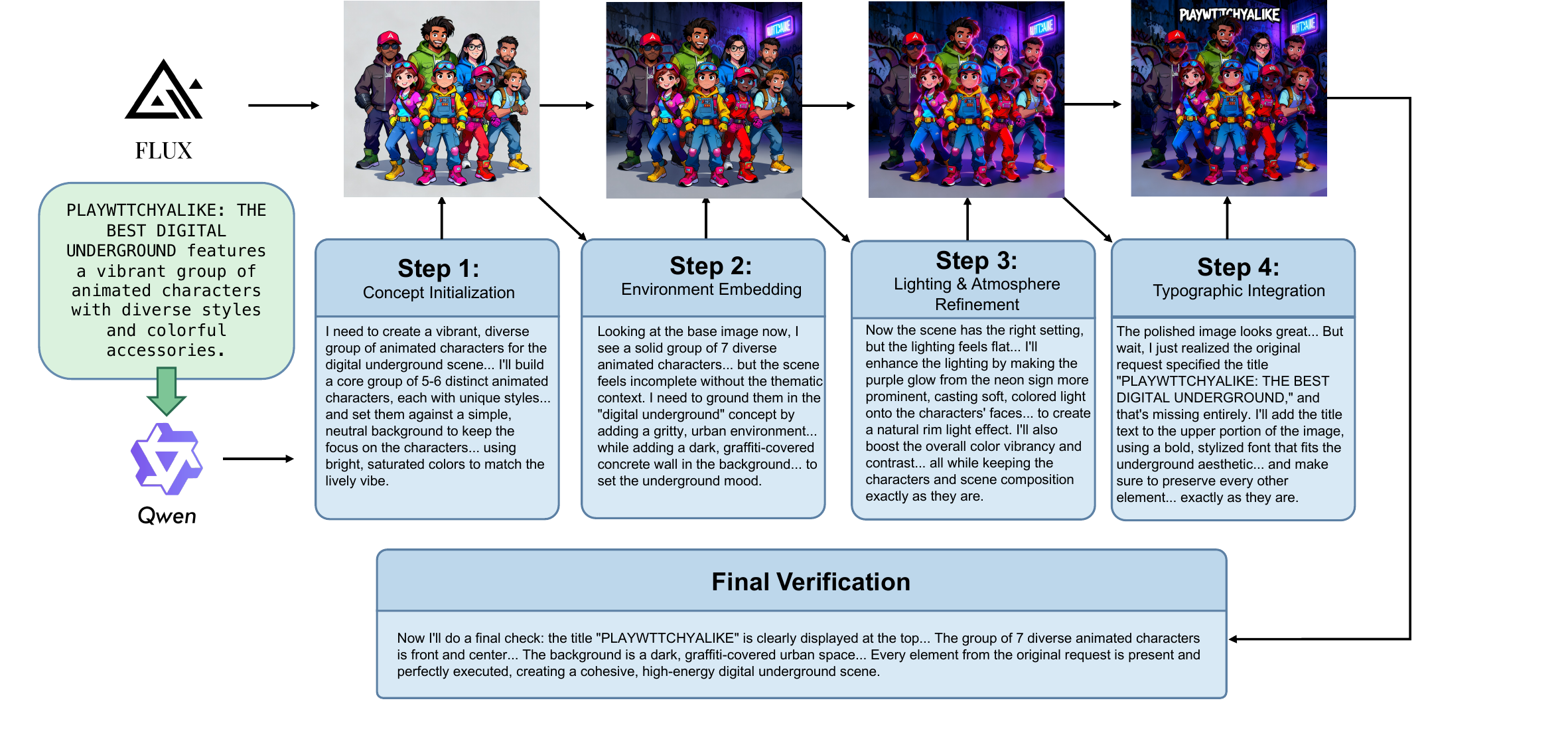}
    \caption{A step-by-step CLVR inference case. The trajectory begins with concept initialization (Step 1), followed by environment embedding (Step 2), lighting and atmosphere refinement (Step 3), and final typographic integration (Step 4), resulting in a cohesive image that fulfills all complex prompt requirements.}
    \label{fig:cot_case}
  \end{figure}

\subsection{Benchmark Descriptions}
\label{sec:appendix_benchmark_descriptions}

In this section, we provide a detailed introduction to the five comprehensive benchmarks used in our evaluation, covering aspects from fine-grained compositional alignment to complex world-knowledge reasoning.

\paragraph{GenEval \cite{geneval}} is an object-focused framework designed to evaluate the compositional properties of text-to-image models. Unlike holistic metrics such as CLIPScore, GenEval leverages off-the-shelf object detection and segmentation models (e.g., Mask2Former) to verify whether generated images faithfully follow fine-grained instructions. It specifically decomposes prompts into six atomic tasks: single object presence, two-object co-occurrence, counting, color attribution, spatial positioning, and attribute binding. By providing binary correctness signals for each component, it offers an interpretable and granular assessment of a model's ability to handle complex semantic compositions.

\paragraph{GenEval++ \cite{Echo-4o}} serves as a more challenging extension of the original GenEval, introduced to address the issues of score saturation and evaluation inaccuracy in existing benchmarks. It significantly increases the complexity of instructions by coupling more objects with diverse attributes. Instead of relying solely on rule-based detectors, GenEval++ employs advanced vision-language models (e.g., GPT-4o) as evaluators to perform a checklist-based verification. A generation is only marked as successful if it satisfies a comprehensive set of criteria, including object count, color, relative position, and size, thereby providing a more rigorous measure of instruction-following fidelity.

\paragraph{ImagineBench \cite{Echo-4o}} focuses on the model's capability for surreal and imaginative generation, moving beyond the simple reproduction of real-world scenes. It requires models to synthesize "unknown" entities by augmenting common objects with fantastical elements while preserving their core identity features (e.g., "a square soccer ball"). The benchmark evaluates three key dimensions: fantasy fulfillment, identity preservation, and aesthetic quality. By emphasizing the tension between creative modification and identity consistency, ImagineBench provides a unique lens into the model's deep semantic understanding and creative synthesis potential.

\paragraph{PRISM-Bench \cite{PRISM-Bench}} (Precise and Robust Image Synthesis Measurement Benchmark) is a large-scale, multi-dimensional benchmark introduced alongside the FLUX-Reason-6M dataset. It comprises seven distinct evaluation tracks: Imagination, Entity, Text rendering, Style, Affection, Composition, and a particularly challenging Long Text track. The latter utilizes Generation Chain-of-Thought (GCoT) descriptions to test the model's ability to follow complex, multi-step reasoning instructions. PRISM-Bench leverages state-of-the-art VLMs for human-aligned assessment, ensuring a robust evaluation of both prompt-image alignment and visual aesthetics across a broad spectrum of generative tasks.

\paragraph{WiseBench \cite{niu2025wise}} (based on the WISE benchmark) is specifically designed to evaluate how well T2I models integrate and apply world knowledge during generation. It shifts the focus from shallow word-pixel mapping to deep semantic reasoning, challenging models with 1,000 meticulously crafted prompts across 25 subdomains, including cultural common sense, spatio-temporal reasoning, and natural sciences. The benchmark introduces the WiScore metric, which provides a weighted assessment of consistency, realism, and aesthetic quality. WiseBench is instrumental in revealing the "understanding-generation gap," where models might possess internal knowledge but struggle to manifest it accurately in synthesized pixels.

\subsection{Detail of Data Source and Pipeline} \label{sec:appendix_data_pipeline}

The construction of the CLVR training data follows a rigorous automated pipeline designed to synthesize high-quality reasoning trajectories. We utilize the \texttt{FLUX-Reason-6M} dataset as the primary source for initial prompts. To prevent data leakage and ensure evaluation integrity, we enforce strict isolation between training and probing data (\ref{sec:probe}), ensuring no overlap between the training trajectories and the evaluation benchmarks. Starting from approximately $10^5$ candidate prompts, our pipeline yields \textbf{20,861} high-quality trajectories---a retention rate of 20.9\%, after passing through a series of stringent filtering and quality control stages.

The data generation is orchestrated by a \textit{state-constrained agentic controller}, where \textbf{Gemini 2.5 Pro} \cite{Gemini25} serves as the VLM Controller for reasoning and state management, and Seedream 4 \cite{Seedream4} acts as the Diffusion Agent for image generation and refinement. The controller transitions through a predefined state machine: $\texttt{generate\_base\_image} \to \texttt{inspect} \to \texttt{edit/refine} \to \texttt{validate} \to \texttt{finalize}$. Each transition is governed by strict constraints, where the agent is limited to specific tools with validated input/output schemas. To ensure robustness, each state is assigned a fixed retry budget; trajectories that fail to converge within this budget or violate format constraints are immediately discarded.

To guarantee the logical and visual fidelity of the synthesized trajectories, we implement a multi-dimensional quality control mechanism. Central to this is a joint model consensus strategy, where both Gemini 2.5 Pro and Seed 1.8 \cite{seed2026seed18modelcardgeneralized} must concurrently validate the correctness of each CoT step and agree that the resulting CoT sequence yields a final image superior to a single-step baseline. Furthermore, we employ \textit{blind A/B testing} during the refinement phase to retain only the most optimal reasoning path. Any step exhibiting logical incoherence or quality degradation triggers a \texttt{FAIL(mid)} flag, leading to the immediate termination of the trajectory. The finalized data is exported in \texttt{ShareGPT} format, incorporating \texttt{rewrite} rules and \texttt{<IMG\_GEN\_n>} tokens to align multi-step reasoning with the target diffusion model's conditioning.

\begin{figure}[ht]
  \centering
  \includegraphics[width=\linewidth]{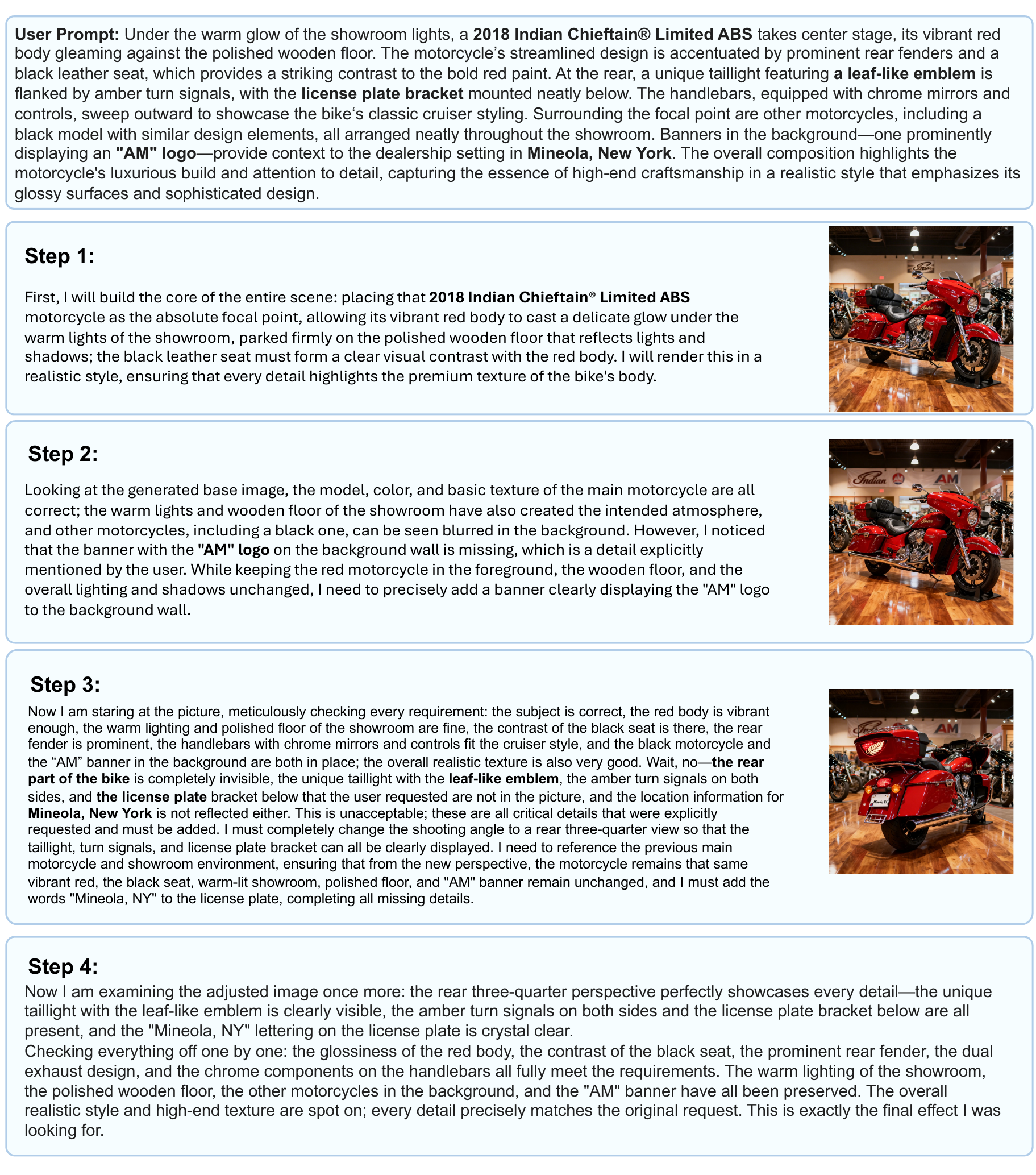}
  \caption{An example of training data generated by data synthesis pipeline. The system first generates a base image, then identifies a missing "AM" logo in the background and adds it. Finally, it recognizes that the rear details are missing and changes the perspective to a rear three-quarter view to include the taillight and license plate, demonstrating effective self-correction.}
  \label{fig:trace_showcase}
\end{figure}

Figure~\ref{fig:trace_showcase} provides a concrete visualization of a verified reasoning trajectory synthesized by our data engine. The example demonstrates how the controller iteratively refines the image based on visual feedback, transitioning through reasoning, editing, and verification states until the final objective is met.

\subsection{Additional Results on ImagineBench and GenEval++} \label{sec:appendix_additional_benchmarks}

In this section, we provide the detailed performance for GenEval++ and ImagineBench, as summarized in Table \ref{tab:exp-genevalpp-imagine-detailed}.

\subsection{Ablation Study on CLVR and PPRL Effectiveness}

Table~\ref{tab:ablation_RL} measures two effects: the benefit of CLVR itself without diffusion post-training, and the additional effect of diffusion post-training variants on top of CLVR.
The rows marked as "rewrite" use Qwen3-VL 8B only to rewrite the input prompt before a single generation pass, providing an open-loop semantic-enrichment baseline. The "VLM SFT only" CLVR setting means that the VLM controller is supervised fine-tuned for CLVR planning, while the diffusion model is not SFT-tuned or RL-aligned. Both simple RL and PPRL are applied after diffusion SFT. The simple RL baseline uses the same RL recipe as PPRL, but its reward model scores each rollout using only the reasoning text from the preceding round and, when images are present, only the most recent image. PPRL instead converts the interleaved multimodal history into proxy prompts, making the reward signal explicitly conditioned on the intended step-level visual goal.

First, CLVR provides clear gains on explicit compositional evaluation even without diffusion post-training. On GenEval, enabling CLVR with only VLM SFT improves the FLUX.2 4B Distill baseline from 0.81 to 0.86 overall, while open-loop prompt rewriting degrades it to 0.78. On WiseBench, however, most of the improvement over the distilled baseline comes from the VLM's knowledge injection through prompt rewriting (0.48 to 0.64), and the VLM-SFT-only CLVR setting further improves the score only modestly to 0.6577. This indicates that VLM SFT at this stage mainly teaches the controller to use CoT-style context and the closed-loop interface, but does not yet fully solve knowledge-intensive generation; stronger gains require aligning the diffusion model to follow these multi-step contextual instructions.

Second, diffusion post-training mainly helps when the prompt exceeds simple compositional matching. Diffusion SFT alone does not immediately improve WiseBench (0.65 to 0.62), because learning to consume long multimodal CoT contexts is substantially harder than fitting ordinary text-image pairs. We view this stage as an initialization step: it increases the probability that the diffusion model executes multi-step contextual instructions correctly, but is not sufficient to master them without RL optimization. Once followed by PPRL, WiseBench rises to 0.74, whereas the corresponding GenEval gain is smaller (0.86 to 0.87). This is expected because GenEval is both closer to saturation and built from relatively explicit prompts, while CLVR is designed for complex instructions that require multi-step decomposition, contextual following, and visual self-correction.

Third, the RL comparison isolates why the proxy reward is necessary. Simple RL is unstable after diffusion SFT: it drops on GenEval (0.85 to 0.84) and brings only a small WiseBench gain (0.62 to 0.64). In contrast, PPRL improves both benchmarks (0.87 on GenEval and 0.74 on WiseBench). These results support our diagnosis that directly rewarding long multimodal CoT rollouts is noisy: free-form CoT describes reasoning progress but does not always specify the visual objective in an explicit instruction format, making it difficult for the reward model to identify what should be evaluated. By converting each step into a proxy prompt, PPRL makes the reward target explicit and produces cleaner credit assignment.

\begin{table*}[t]
  \centering
  \small
  \caption{Ablation study of CLVR and PPRL on GenEval and WiseBench. "rewrite" denotes an open-loop prompt rewrite by Qwen3-VL 8B before single-pass generation. "VLM SFT only" means that only the VLM controller is supervised fine-tuned for CLVR planning, while the diffusion model is not SFT-tuned or RL-aligned. "diffusion SFT" denotes supervised fine-tuning of the diffusion model on CLVR trajectories. "simple RL" uses the same RL setup as PPRL but rewards rollouts from the previous-round reasoning text and the latest image only, while PPRL uses proxy prompts derived from the full multimodal trajectory to provide step-level reward targets. On GenEval, CLVR improves over both the distilled and rewrite baselines; on WiseBench, rewrite accounts for much of the initial knowledge-driven gain, while diffusion SFT+PPRL brings the largest additional improvement.}
  \label{tab:ablation_RL}
  
  \textbf{Panel A: ablation on GenEval.}
  \resizebox{\textwidth}{!}{
  \begin{tabular}{lccccccc}
  \toprule
  Setting & Single Object & Two Objects & Counting & Colors & Position & Color Attributes & Overall \\
  \midrule
  FLUX.2 4B Distill & 0.99 & 0.91 & 0.77 & 0.86 & 0.69 & 0.64 & 0.81 \\
  FLUX.2 4B Distill (rewrite) & 0.99 & 0.86 & 0.66 & 0.83 & 0.64 & 0.68 & 0.78 \\
  + CLVR (VLM SFT only) & 1.00 & 0.93 & 0.80 & 0.86 & 0.85 & 0.69 & 0.86 \\
  + CLVR + diffusion SFT & 1.00 & 0.95 & 0.81 & 0.86 & 0.81 & 0.67 & 0.85 \\
  + CLVR + diffusion SFT + simple RL & 1.00 & 0.93 & 0.79 & 0.89 & 0.78 & 0.65 & 0.84 \\
  + CLVR + diffusion SFT + PPRL & 0.99 & 0.92 & 0.85 & 0.89 & 0.85 & 0.71 & 0.87 \\
  \bottomrule
  \end{tabular}
  }
  
  \vspace{8pt}
  \textbf{Panel B: ablation on WiseBench.}
  \resizebox{\textwidth}{!}{
  \begin{tabular}{lccccccc}
  \toprule
  Setting & Cultural & Time & Space & Biology & Physics & Chemistry & Overall \\
  \midrule
  FLUX.2 4B Distill & 0.42 & 0.52 & 0.64 & 0.47 & 0.57 & 0.41 & 0.48 \\
  FLUX.2 4B Distill (rewrite) & 0.64 & 0.60 & 0.75 & 0.61 & 0.67 & 0.55 & 0.64 \\
  + CLVR (VLM SFT only) & 0.65 & 0.64 & 0.74 & 0.66 & 0.70 & 0.53 & 0.65 \\
  + CLVR + diffusion SFT & 0.62 & 0.62 & 0.73 & 0.60 & 0.62 & 0.54 & 0.62 \\
  + CLVR + diffusion SFT + simple RL & 0.62 & 0.61 & 0.74 & 0.65 & 0.68 & 0.54 & 0.64 \\
  + CLVR + diffusion SFT + PPRL & 0.73 & 0.72 & 0.83 & 0.73 & 0.80 & 0.64 & 0.74 \\
  \bottomrule
  \end{tabular}
  }
  \end{table*}

\subsection{Training Setup and Hyperparameters}
\label{sec:appendix_training_setup_en}

\paragraph{Supervised Fine-Tuning (SFT)}
For the visual-language model in our system, we perform full-parameter fine-tuning based on Qwen3-VL 8B. The training data consists of 20,861 processed trajectory samples, consistent with those described in Appendix \ref{sec:appendix_data_pipeline}. During training, we employ \texttt{bf16} mixed precision and a cosine learning rate scheduler. The primary hyperparameters include a learning rate of $1 \times 10^{-5}$, a warmup ratio of $0.1$, and a total of 3 training epochs. Regarding batch processing, the per-device batch size is set to 1 with gradient accumulation steps of 8.

In the supervised fine-tuning phase of the diffusion model, we utilize the same set of 20,861 trajectory metadata. Based on the FLUX.2 architecture, we perform full-parameter fine-tuning on the DiT (Diffusion Transformer) component. Key parameters are as follows: a learning rate of $2 \times 10^{-5}$, 3 training epochs, a per-device batch size of 1, and a training image resolution of $1024 \times 1024$. During training, a trajectory with $n$ steps is decomposed into $n$ individual training samples based on the number of images, where the training objective is the image at the end of each sample. The resulting checkpoint from this SFT phase serves as the foundation model for the subsequent reinforcement learning stage. 

\paragraph{Reinforcement Learning (RL) for Diffusion Model}
For the reinforcement learning phase, we utilize the DiffusionNFT algorithm for training, initialized with the weights saved during the SFT phase (SFT Warmup Checkpoint). This stage employs LoRA (Low-Rank Adaptation) for fine-tuning, with $\text{Rank} = 128$ and $\alpha = 256$. The training resolution is set to $512 \times 512$, and the number of sampling steps during rollout is 8 (Note: See Appendix \ref{sec:appendix_training_setup_en} for details on the evaluation metrics and unified protocols for the final model). Other critical training parameters include a learning rate of $1 \times 10^{-4}$, a KL penalty coefficient $\beta_{\text{KL}}$ of $1 \times 10^{-5}$, a per-device batch size of 1, a group size of 16, and a CFG (Classifier-Free Guidance) scale of 4.0. All training stages (SFT and RL) and the experiments reported in the main paper were run using 8 NVIDIA H20 GPUs.

Regarding the design of feedback signals, we configure a dual-path reward mechanism for T2I (Text-to-Image) and I2I (Image-to-Image) tasks:
\begin{itemize}
    \item The general reward for both T2I and I2I is provided by the \texttt{unifiedreward} model, which scores aesthetic quality and text-image alignment.
    \item The I2I editing reward is based on the \texttt{unifiedreward\_edit} model, providing precise feedback for multi-turn editing instructions (where the instruction key points to \texttt{proxy\_i2i\_prompt}).
\end{itemize}
To ensure a balanced training across multi-step chains of thought, the task mixing weight for T2I and I2I is set to $1:1$. Additionally, the sampling weights for different I2I step counts (steps 1, 2, 3, and $\ge 4$) are all set to equal proportions ($1:1:1:1$).

\subsection{Probe Study: Semantic Complexity Scaling}
\label{sec:probe_appendix}

This section details the Semantic Complexity Scaling Probe used to diagnose structural degradation in single-step T2I generation. This probe provides the empirical basis for introducing Closed-Loop Visual Reasoning (CLVR). All conclusions are specific to the complexity-stratified prompt set and evaluation protocols defined herein and should not be treated as universal theoretical guarantees.

\paragraph{Semantic Dependency Graph.}
For a prompt $P$, we map its requirements to a semantic dependency graph $G(P) = (\mathcal{V}, \mathcal{E}_{\mathrm{attr}}, \mathcal{E}_{\mathrm{rel}}, \mathcal{H})$. Here, $\mathcal{V}$ denotes entity instances, $\mathcal{E}_{\mathrm{attr}}$ represents attribute-to-entity bindings, $\mathcal{E}_{\mathrm{rel}}$ captures inter-entity relationships (spatial, action, or subordinate), and $\mathcal{H}$ contains hard constraints (e.g., style, text rendering) not captured by standard edges. For an entity group $j$ with $c_j$ instances and attributes $\mathcal{A}_j$, the total nodes $N$ and attribute edges $E_{\mathrm{attr}}$ are:
\begin{equation}
\label{eq:node_edge_counts}
N = \sum_{j=1}^{M} c_j, \quad E_{\mathrm{attr}} = \sum_{j=1}^{M} c_j \,\lvert \mathcal{A}_j \rvert.
\end{equation}
The total edge count is $E = E_{\mathrm{attr}} + \lvert \mathcal{E}_{\mathrm{rel}} \rvert$. Hard constraints $\mathcal{H}$ are handled separately via $R_{\mathrm{extra}}$.

\paragraph{Task Complexity Score.}
We define $C_{\mathrm{task}}(P)$ as a proxy for task difficulty, inspired by structural description length:
\begin{equation}
\label{eq:C_task_general}
C_{\mathrm{task}}(P) = \alpha N \log(1+N) + \beta E + \gamma_{w} \log(1+W) + R_{\mathrm{extra}},
\end{equation}
where $W$ is the word count. The term $N\log(1+N)$ accounts for the non-linear cost of organizing multiple entities, while $E$ represents linear constraint growth. Following our reproducible protocol, we set $\alpha = \beta = 1$. The term $R_{\mathrm{extra}}$ aggregates specific constraints:
\begin{equation}
\label{eq:R_extra}
R_{\mathrm{extra}} = \sum_{\substack{\mathrm{type}\,\in\,\mathcal{S}_{\mathrm{cnst}} \\ \mathcal{S}_{\mathrm{cnst}}=\{\mathrm{global},\,\mathrm{count},\, \\ \mathrm{text},\,\mathrm{neg}\}}} c_{\mathrm{type}}\, n_{\mathrm{type}},
\end{equation}
with empirical weights $c_{\mathrm{text}}=3.0$, $c_{\mathrm{count}}=2.0$, $c_{\mathrm{neg}}=1.5$, and $c_{\mathrm{global}}=0.5$, reflecting their relative difficulty for current diffusion models.

\paragraph{Stratification and Targeted Trimming.}
Prompts are partitioned into ten complexity tiers $\mathcal{T}_{01}$--$\mathcal{T}_{10}$ based on $C_{\mathrm{task}}$ quantiles. To decouple semantic complexity from text length, each tier is constrained to a specific word-count interval $[\ell_k, u_k]$. We employ Targeted Trimming (TRIM) to populate sparse tiers by iteratively removing secondary elements from high-complexity samples until they fall within the target $(C_{\mathrm{task}}, W)$ feasibility region, ensuring semantic naturalness.

\paragraph{Evaluation Protocol.}
We evaluate models (listed in Table~\ref{tab:probe_summary}) using $4$ images per prompt with fixed seeds $\mathcal{S} = \{42, 123, 456, 789\}$. A VLM-based judge evaluates fine-grained recall $r_{i,m}$ (fraction of satisfied constraints) and strict pass rate $p_{i,m}$ (all constraints satisfied). Results are aggregated across images before prompt-level statistics are computed.

\paragraph{AUC Metrics.}
To measure stability across the complexity axis, we calculate the Area Under the Curve (AUC) using trapezoidal integration:
\begin{equation}
\label{eq:auc_pass}
\mathrm{AUC}_{\mathrm{pass}}(m) = \sum_{k=1}^{9} \frac{x_{k+1}-x_k}{2} \left(y^{\mathrm{pass}}_{m,k}+y^{\mathrm{pass}}_{m,k+1}\right),
\end{equation}
where $x_k$ is the median $C_{\mathrm{task}}$ of tier $k$. AUC provides a more robust metric than single-threshold breakdown points by accounting for performance across the entire complexity spectrum.

\paragraph{Spectral Capacity Proxy.}
To characterize backbone complexity beyond raw parameter counts, we use the entropy effective rank $I_{\mathrm{eff}}$~\cite{I_eff}. This metric captures the usable expressive dimensions by analyzing the singular value distribution of weight matrices. For a matrix $\mathbf{W}$ with singular values $\{\sigma_i\}$, $I_{\mathrm{eff}}$ is derived from the entropy of the normalized energy distribution $p_i = \sigma_i^2 / \sum \sigma_j^2$. We report the median $r_{\mathrm{ent}}(\mathbf{W})$ across all core backbone layers.

\paragraph{Results.}
As shown in Table~\ref{tab:probe_summary}, single-step models follow a power-law relationship: $\mathrm{AUC}_{\mathrm{pass}} \propto I_{\mathrm{eff}}^{1.075}$ ($R^2=0.773, \rho=0.964$). CLVR (FLUX2) significantly deviates from this trend, achieving an $\mathrm{AUC}_{\mathrm{pass}}$ of $98.79$ compared to $73.89$ for its base model. This gain confirms that visual feedback and iterative refinement effectively bypass the capacity limitations of single-step generation.

\begin{table}[t]
\centering
\small
\caption{Semantic Complexity Scaling Probe results. $I_{\mathrm{eff}}$ serves as a spectral capacity proxy.}
\label{tab:probe_summary}
\begin{tabular}{lcrrrrr}
\hline
Model & DiT Params (B) & $I_{\mathrm{eff}}$ & $N$ & Pass & Median $C_{\mathrm{task}}$ & $\mathrm{AUC}_{\mathrm{pass}}$ \\
\hline
HunyuanDiT \cite{hunyuandit} & 1.5 & 514.48 & 992 & 0.129 & 11.68 & 17.67 \\
SD3.5-Med \cite{SD3.5} & 2.5 & 604.11 & 992 & 0.244 & 28.46 & 42.53 \\
AuraFlow \cite{auraflow} & 6.8 & 852.15 & 992 & 0.183 & 22.75 & 31.75  \\
SD3.5-Lrg \cite{SD3.5} & 8.0 & 977.86 & 992 & 0.288 & 34.65 & 53.79 \\
CogView3+ \cite{CogView} & 3.0 & 1063.40 & 992 & 0.292 & 39.56 & 55.71 \\
CogView4 \cite{CogView} & 6.0 & 1411.05 & 992 & 0.352 & 50.54 & 70.83  \\
FLUX.2 base \cite{flux-2} & 4.0 & 1586.36 & 992 & 0.372 & 50.32 & 73.89  \\
CLVR (Flux.2) & 4.0 & 1586.36 & 992 & 0.451 & 53.76 & 98.79 \\
\hline
\end{tabular}
\end{table}

\paragraph{Scope and Limitations.}
The probe's findings are specific to the chosen parser, $C_{\mathrm{task}}$ weighting, and VLM-judge protocol. It is designed for comparative analysis of relative trends rather than as a universal benchmark. While single-step models exhibit systematic degradation with increasing complexity, CLVR demonstrates that closed-loop reasoning can mitigate this collapse without scaling the underlying backbone's parameters.

\subsection{Inference Configuration, Sampling, and Trajectory Efficiency}
\label{sec:appendix_inference_protocol_en}

To ensure reproducibility, this section details the unified inference and sampling protocols employed across all benchmarks (GenEval, GenEval++, ImagineBench, PRISM, and WiseBench). We also analyze the generation efficiency and inference trajectory distribution of our method. A globally consistent configuration was maintained throughout all evaluations without per-benchmark tuning.

\paragraph{Maximum Closed-Loop Iterations}
During the Closed-Loop Visual Reasoning process, we strictly limit the maximum number of image generation cycles to 8 per task. Empirical observations across all benchmarks indicate that all tasks conclude within this 8-iteration limit. This upper bound effectively balances task success rates with computational overhead.

\paragraph{Diffusion Sampling Strategy}
We employ two standard decoding strategies tailored to different model variants:
\begin{itemize}
    \item \textbf{Base Decoding:} For the non-distilled base diffusion branch, we use 28 diffusion sampling steps with a Classifier-Free Guidance (CFG) scale of 4.0.
    \item \textbf{Distill Decoding:} For the accelerated distillation branch, we use 4-step rapid sampling without CFG (guidance scale set to 1.0). The deployed model, obtained via $\Delta$-Space Weight Merge (DSWM), uniformly follows this distillation configuration during inference.
\end{itemize}

\paragraph{Inference Acceleration and Trajectory Length Distribution}
\label{sec:appendix_inference_acceleration_en}
To quantify the acceleration provided by DSWM, we measure the end-to-end (E2E) latency on a server equipped with two NVIDIA H20 GPUs. The system is deployed using the vLLM framework, with the diffusion model and VLM controller allocated in a 1:1 ratio (one GPU each). Table~\ref{tab:geneval_step_timing_en} summarizes the E2E latency and trajectory distribution on the GenEval (553 test cases) and PRISM (700 test cases) benchmarks.

The results demonstrate that on GenEval, the majority of tasks (approximately 68\%) are successfully completed within 2 iterations, while 28\% require 3 iterations. Only a few challenging cases extend to 4--5 iterations, all of which remain well below the 8-iteration limit. Benefiting from the compatibility of DSWM with 4-step distillation, we significantly reduce latency while maintaining closed-loop visual feedback. For instance, in the most frequent 2-iteration trajectory, DSWM reduces the average E2E generation time from 287.0 seconds (Base) to 25.5 seconds, achieving approximately an 11$\times$ speedup.

Furthermore, comparing the sample distributions between the two benchmarks reveals that GenEval is relatively simple, requiring fewer reasoning steps. However, for its hard cases, CLVR effectively improves accuracy. In contrast, when evaluated on the more complex general benchmark (PRISM), the inference trajectory is noticeably longer. This demonstrates that our model possesses the capability to adaptively adjust its reasoning length based on the difficulty of the prompt.

\begin{table}[htbp]
\centering
\caption{Inference efficiency analysis. (a) Average end-to-end generation time (seconds) for different iteration counts. (b) Distribution of test cases across iteration counts on GenEval and PRISM benchmarks.}
\label{tab:geneval_step_timing_en}
\begin{minipage}[t]{0.42\textwidth}
\centering
\vspace{0pt}
\textbf{(a) Average E2E Generation Time (s)} \\
\vspace{4pt}
\begin{tabular}{ccc}
\toprule
\textbf{Iter.} & \textbf{\shortstack{Base\\(28 steps)}} & \textbf{\shortstack{DSWM\\(4 steps)}} \\
\midrule
1 & 192.4 & 12.6 \\
2 & 287.0 & 25.5 \\
3 & 471.7 & 38.2 \\
4 & 707.7 & 52.3 \\
5 & 950.2 & 68.7 \\
\bottomrule
\end{tabular}
\end{minipage}
\hfill
\begin{minipage}[t]{0.54\textwidth}
\centering
\vspace{0pt}
\textbf{(b) Sample Distribution} \\
\vspace{4pt}
\begin{tabular}{ccccc}
\toprule
\multirow{2}{*}{\textbf{Iter.}} & \multicolumn{2}{c}{\textbf{GenEval}} & \multicolumn{2}{c}{\textbf{PRISM}} \\
\cmidrule(lr){2-3}\cmidrule(lr){4-5}
& \textbf{Base} & \textbf{DSWM} & \textbf{Base} & \textbf{DSWM} \\
\midrule
1 & 2 & 2 & 4 & 3 \\
2 & 368 & 376 & 222 & 282 \\
3 & 158 & 154 & 363 & 351 \\
4 & 24 & 17 & 78 & 52 \\
5 & 1 & 4 & 25 & 10 \\
6 & 0 & 0 & 8 & 2 \\
\bottomrule
\end{tabular}
\end{minipage}
\end{table}

\subsection{Statistical Uncertainty for Primary Benchmarks}
\label{sec:checklist_statistics}

\noindent
This section documents \emph{standard errors} (SE) and \emph{nominal $95\%$ confidence intervals} for the Overall metrics on GenEval~\cite{geneval} and WiseBench~\cite{niu2025wise} for our method, under the same evaluation protocol as the main tables (Appendix~\ref{sec:appendix_inference_protocol_en}). It is provided for reproducibility checklist reporting.

\paragraph{GenEval.}
The Overall score is the fraction of prompts that pass automated verification on the official GenEval split ($N{=}553$ prompts). We report the Wald standard error of a binomial proportion,
\begin{equation}
  \mathrm{SE}_{\mathrm{bin}} = \sqrt{\frac{\hat{p}(1-\hat{p})}{N}},
\end{equation}
and $95\%$ \textbf{Wilson score} confidence intervals for $\hat{p}$ (recommended for moderate $N$ and $\hat{p}$ near $0$ or $1$).

\paragraph{WiseBench.}
The Overall WiScore is the mean score over the official WiseBench test set ($N{=}1000$ prompts; scores per prompt are in $\{0,1,2\}$ as defined by the benchmark). The standard error is the standard error of the mean, $\mathrm{SE}=s/\sqrt{N}$, where $s$ is the sample standard deviation of per-prompt scores from the same evaluation run. Nominal $95\%$ intervals use the normal approximation $\bar{x}\pm 1.96\,\mathrm{SE}$ (adequate at $N{=}1000$).

\begin{table}[t]
  \centering
  \small
  \caption{CLVR Overall scores with standard errors and $95\%$ confidence intervals. GenEval: Wilson intervals on $N{=}553$ prompts. WiseBench: normal-approximation intervals from $\mathrm{SE}=s/\sqrt{N}$ on $N{=}1000$ prompts.}
  \label{tab:checklist_se_ci}
  \begin{tabular}{llccccc}
    \toprule
    \textbf{Model} & \textbf{Benchmark} & \textbf{Metric} & \textbf{Point est.} & \textbf{$N$} & \textbf{SE} & \textbf{$95\%$ CI} \\
    \midrule
    CLVR (4B) & GenEval & Overall pass & $0.87$ & $553$ & $0.0146$ & $[0.8333,\,0.8904]$ \\
    CLVR (4B) & WiseBench & Overall WiScore & $0.7405$ & $1000$ & $0.0093$ & $[0.7223, \, 0.7588]$ \\
    \midrule
    CLVR (9B) & GenEval & Overall pass & $0.88$ & $553$ & $0.0143$ & $[0.8403,\,0.8964]$ \\
    CLVR (9B) & WiseBench & Overall WiScore & $0.7584$ & $1000$ & $0.0091$ & $[0.7405, \, 0.7763]$ \\
    \bottomrule
  \end{tabular}
\end{table}

\subsection{Qualitative Study: More Case Showcases}
\label{sec:appendix_quality_study}

In this section, we provide additional qualitative results to further demonstrate the effectiveness of our Closed-Loop Visual Reasoning (CLVR) framework across various complex scenarios. Figure~\ref{fig:gallery_combined} showcases the model's ability to handle intricate attribute bindings, spatial relationships, and iterative refinements.

\subsection{Limitations and Future Work}
\label{sec:limitations}

While the proposed Closed-Loop Visual Reasoning (CLVR) framework achieves a significant system-level breakthrough in complex text-to-image generation, we acknowledge several limitations that provide fertile ground for future investigation.

\paragraph{User-Controllable Inference Budget.}
CLVR significantly enhances generation quality for complex semantics through multi-turn visual feedback. However, this adaptive iterative reasoning inevitably introduces additional computational and temporal overhead. In the current system design, the number of feedback loops and the termination criteria are autonomously determined by the model based on the state of the generated image. Consequently, it is difficult for users to explicitly intervene in the trade-off between quality and cost. Future work could explore the implementation of a "thinking budget" control interface. This would allow users to manually specify the maximum number of reasoning steps, the frequency of visual feedback, or the upper bound of computational resources based on task difficulty, latency constraints, or specific quality requirements, thereby returning control over the inference budget to the user.

\paragraph{Expansion to Diverse Modalities and Scenarios.}
The focus of this study is primarily on closed-loop visual reasoning within the domain of static image generation. Nevertheless, the closed-loop visual reasoning paradigm is inherently generalizable and can be naturally extended to other modalities and scenarios. Potential applications include consistent multi-image generation, long-form video synthesis, 3D asset creation, and interactive design workflows. In these contexts, closed-loop reasoning will encounter novel challenges such as temporal consistency, cross-view geometric constraints, and dynamic shifts in user preferences. These areas represent promising directions for subsequent research.

\subsection{Broader Impacts}
\label{sec:broader_impacts}

\noindent
Text-to-image systems that better follow complex instructions can support creative production, prototyping, and inclusive communication. They also raise familiar risks: synthetic imagery may be misused for deception, impersonation, or harassment, and stronger semantic control could amplify such misuse if deployed without safeguards. CLVR contributes a closed-loop reasoning methodology rather than a consumer-facing product; we do not study moderation or release policy here. Future integrations could align iterative verification with organizational content policies and accountability tooling. We encourage layered mitigations, including disclosure of synthetic content, abuse monitoring, safety classifiers, and organizational review for high-stakes deployments, alongside continued research on robust media provenance.


\end{document}